\documentclass{article} 
\usepackage{iclr2027_conference,times}

\usepackage{amsmath,amsfonts,bm}

\def\eqref#1{equation~\ref{#1}}

\def\1{\bm{1}}

\DeclareMathAlphabet{\mathsfit}{\encodingdefault}{\sfdefault}{m}{sl}
\SetMathAlphabet{\mathsfit}{bold}{\encodingdefault}{\sfdefault}{bx}{n}

\usepackage{microtype}
\usepackage{amsmath, amssymb, amsthm}
\usepackage{mathtools}
\usepackage{bm}
\usepackage{booktabs}
\usepackage{multirow}
\usepackage{array}
\usepackage{tabularx}
\usepackage{graphicx}
\usepackage[table]{xcolor}   
\usepackage{algorithm}
\usepackage{algpseudocode}
\usepackage{enumitem}
\usepackage{caption}
\usepackage{subcaption}
\usepackage{url}
\usepackage{textcomp}
\usepackage{arydshln}
\usepackage{cite}

\usepackage{wrapfig}   

\usepackage{xcolor}
\usepackage{listings}

\lstdefinestyle{promptstyle}{
    backgroundcolor=\color{gray!10},    
    basicstyle=\ttfamily\small,
    breaklines=true,
    breakatwhitespace=true,
    showspaces=false,
    showstringspaces=false,
    showtabs=false,
    tabsize=4,
    frame=none, 
}

\usepackage[table]{xcolor}
\definecolor{slateHeader}{HTML}{e2e8f0}  
\definecolor{slateMid}{HTML}{f1f5f9}     
\definecolor{slateLite}{HTML}{f8fafc}    
\definecolor{oursBlue}{HTML}{eff6ff}     

\usepackage{dsfont}
\newcommand{\ind}[1]{\mathds{1}}

\usepackage{hyperref} 
\usepackage[capitalise]{cleveref}
\theoremstyle{remark}
\newtheorem{assumption}{Assumption}

\newtheorem{lemma}{Lemma}
\newtheorem*{lemma*}{Lemma}

\title{ProV-Mem: Proactive Memory for Streaming Video Question Answering}

\iclrfinalcopy

\author{
Bingbing Wang$^{1}$, ~
Jing Li$^{1,2}$\thanks{\quad Corresponding Author} ~,
Han Gao$^{3}$, ~
Yishi Lin$^{3}$, ~
Ruifeng Xu$^{4}$, ~
Peng He$^{3}$\\
    $^{1}$ Department of Computing, The Hong Kong Polytechnic University, Hong Kong, China \\
    $^{2}$ Research Institute for Generative AI, The Hong Kong Polytechnic University, Hong Kong, China  \\
    $^{3}$ Tencent Inc., China\\
    $^{4}$ Shenzhen Loop Area Institute, Shenzhen, China
}

\begin{document}

\maketitle

\begin{abstract}
Proactive streaming video question answering requires a model to continuously monitor an incoming stream and autonomously decide when to speak, what to say, and whether to emit a response. Existing streaming systems face a fundamental difficulty: retaining long visual histories is costly, whereas discarding them can lead to missed events and repeated responses. We propose \textbf{ProV-Mem}, a training-free proactive memory framework built around a frozen VLM backbone. ProV-Mem decomposes the proactive decision into three lightweight memory tiers: a temporal-trigger memory controls when to query the backbone and recovers from prolonged silence, an instructional memory guides what to say through a fixed prompting prior for concise and visually grounded generation, and an episodic memory determines whether to emit by filtering near-duplicate responses. Under an idealized conditional-sufficiency assumption, we derive a stagewise approximation bound relating the decomposed process to a full-state joint
policy. We further introduce Cost-Aware Area Under the Curve (\textbf{CAUC}), which extends PAUC by explicitly discounting redundant, overly frequent, and unnecessarily long emissions. Experiments on the four scenarios of ProactiveVideoQA show that ProV-Mem achieves the best CAUC across all scenarios and the best PAUC on WEB, EGO, and TV, while maintaining low per-response inference latency.
\end{abstract}

\section{Introduction}
\label{sec:intro}
Recent advances in video-language models (VLMs) \citep{lin2024video, maaz2024video, li2024llava} have driven a paradigm shift from retrospective video understanding \citep{xu2016msr, lei2018tvqa} to real-time streaming applications, such as live moderation \citep{xu2026streamingvlm} and embodied agents \citep{driess2023palm, o2024open}.
Unlike traditional passive streaming benchmarks where explicit user queries drive every turn, Proactive Video Question Answering (Proactive-VQA) \citep{wang2025proactivevideoqa} requires an agent to continuously monitor an unbounded video stream without explicit prompts. Under strict real-time constraints, the agent must autonomously determine the timing (when to speak), content (what to say), and necessity (whether to emit) of each response, integrating trigger control directly into the task formulation.

\textbf{A dual predicament: passive memory and a leaky metric.}
To sustain real-time throughput on unbounded streams, existing architectures typically discard raw video frames, which deprives the model of explicit temporal visual history. This lack of historical awareness causes trained-head baselines to exhibit severe templated repetitions and coverage gaps, frequently missing critical events across target windows, as shown in Figure~\ref{fig:intro}(a). Latest pipelines attempt to mitigate this by relying on lightweight feature caches or hidden states to implicitly represent past context. However, these implicit representations remain fundamentally passive: they store compressed context but play no active role in governing when, what, or whether to respond, so any degradation inevitably causes all three controls to collapse simultaneously. Crucially, this systemic failure is further masked by a structural loophole in the standard evaluation protocol. The primary metric, Proactive Area Under the Curve (PAUC) \citep{wang2025proactivevideoqa}, integrates per-fire quality scores over ground-truth windows, which allows degenerate, over-firing policies to artificially inflate their scores by accumulating window duration rather than increasing information density.

\begin{figure}[!t]
  \centering
  \includegraphics[width=0.95\linewidth]{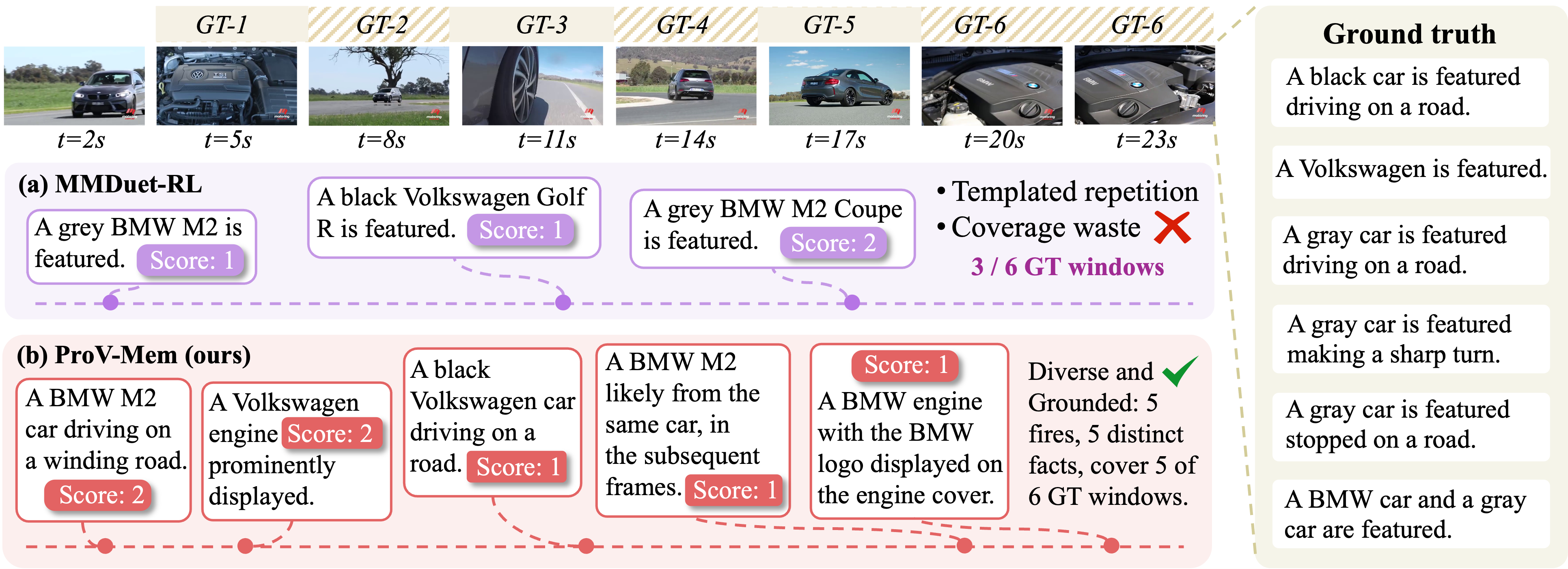}
  \caption{Comparison of (a) existing methods and (b) our ProV-Mem method.}
\vspace{-1em}
\label{fig:intro}
\end{figure}

\textbf{Our solution: from passive caching to proactive memory.}
The root cause of implicit context decay is that a passive buffer plays no structural role in decision-making, forcing a monolithic head to recover timing, content, and redundancy judgements from compressed history alone. We propose a fundamentally different paradigm: \emph{proactive memory}, in which each memory tier does not merely store context but actively governs a distinct axis of the streaming decision. We instantiate this paradigm as \textbf{ProV-Mem}, a training-free framework comprising three constant-size, non-parametric proactive memory tiers sharing a frozen VLM backbone. A temporal-trigger memory actively controls \emph{when} to speak via deterministic gating over a compact timestamp state, with a silence-recovery for re-engagement; an instructional memory actively shapes \emph{what} to say by imposing a canonical surface-form prior that maximises visual grounding and aligns generations with downstream lexical filtering; and an episodic memory actively decides \emph{whether} to emit by caching the latest response and filtering redundancy via character-level similarity. Because each tier maintains constant-size state rather than raw-frame buffers, ProV-Mem eliminates coherence degradation and reduces cross-scenario adaptation to tuning a few interpretable scalars. We further show that these three proactive memory tiers can be optimized independently, with an approximation gap characterized by three stagewise discrepancy terms. Finally, we introduce Cost-Aware Area Under the Curve (CAUC), a metric that penalises redundancy, excessive frequency, and token inefficiency to close the evaluation loophole.

\textbf{Experimental success and contributions.}
Experiments on the ProactiveVideoQA benchmark \citep{wang2025proactivevideoqa} show that ProV-Mem consistently outperforms existing baselines. Without scenario-specific fine-tuning, the framework achieves state-of-the-art proactive performance while maintaining low response latency.
In summary, our main contributions are threefold:
\begin{itemize}[leftmargin=1.2em, itemsep=2pt]
 \item We propose ProV-Mem, a proactive memory framework factorising the proactive decision function into three independent memory stages to enable seamless cross-scenario adaptation.
  \item We expose a structural reward-hacking loophole in existing evaluation metrics and propose CAUC, a rigorous cost-aware metric that penalises redundancy, chatter frequency, and token inefficiency at each emission.
  \item We provide comprehensive empirical evidence showing that ProV-Mem achieves excellent performance on ProactiveVideoQA under both standard and cost-aware evaluation.
\end{itemize}

\section{Related Work}
\label{sec:related}
\textbf{Proactive-VQA and Streaming Video-Language Models}.
\label{sec:related-proactive-streaming}
Passive VQA s, in which the model consumes a full video before answering, has been extensively studied \citep{lei2020tvqa+, zhang2023video}. \citet{wang2025proactivevideoqa} depart from this paradigm by requiring frame-by-frame streaming with autonomous decisions on timing, content, and emission necessity. Broader benchmarks such as StreamingBench \citep{lin2026streamingbench} and OVOBench \citep{niu2025ovo} advance online video understanding but remain question-driven, leaving autonomous firing outside their protocol. Existing methods split into trained trigger-head systems such as MMDuet \citep{wang2025videollm} and VideoLLM-Online \citep{chen2024videollm}, and training-free adaptations via chunk-based \citep{zhang2024long, chen2025longvila} or prompt-based \citep{han2024free, wang2024videoagent} reuse of offline VLMs. A parallel line on long-context streaming, including StreamingVLM \citep{xu2026streamingvlm} and StreamingLLM \citep{xiao2024efficient}, scales to unbounded inputs but assumes passive interaction. The trained-head family degrades outside its training distribution, while training-free and long-context families lack autonomous firing, leaving a structurally complete, proactive controller as the gap our work targets.

\textbf{Prompt-Based Control and Memory-Augmented Models}
\label{sec:related-prompt-memory}
Since CLIP \citep{radford2021learning} and Flamingo \citep{alayrac2022flamingo}, a substantial line of work shapes frozen VLM behaviour at inference time via system prompts and in-context examples, where grounded-detail and persona prompts \citep{you2024ferret, maaz2024video} elicit specificity or stylistic surface form.
In parallel, a memory-augmented lineage attaches explicit memory modules read at inference to modulate generation, including MemoryBank \citep{zhong2024memorybank} which stores summarised interactions in a vector database, and Memorizing Transformers \citep{wu2022memorizing} and RETRO \citep{borgeaud2022improving} which retrieve nearest-neighbour examples during generation. These two lineages are typically pursued in isolation, and the memory-augmented one is moreover parametric, requiring target-domain training and retrieval-time operations that block the streaming forward pass; the absence of a single non-parametric formulation that unifies inference-time prompting and explicit memory while staying backbone-agnostic and streaming-compatible is precisely the gap our work targets.

\begin{figure}[!t]
  \centering
  \includegraphics[width=0.95\linewidth]{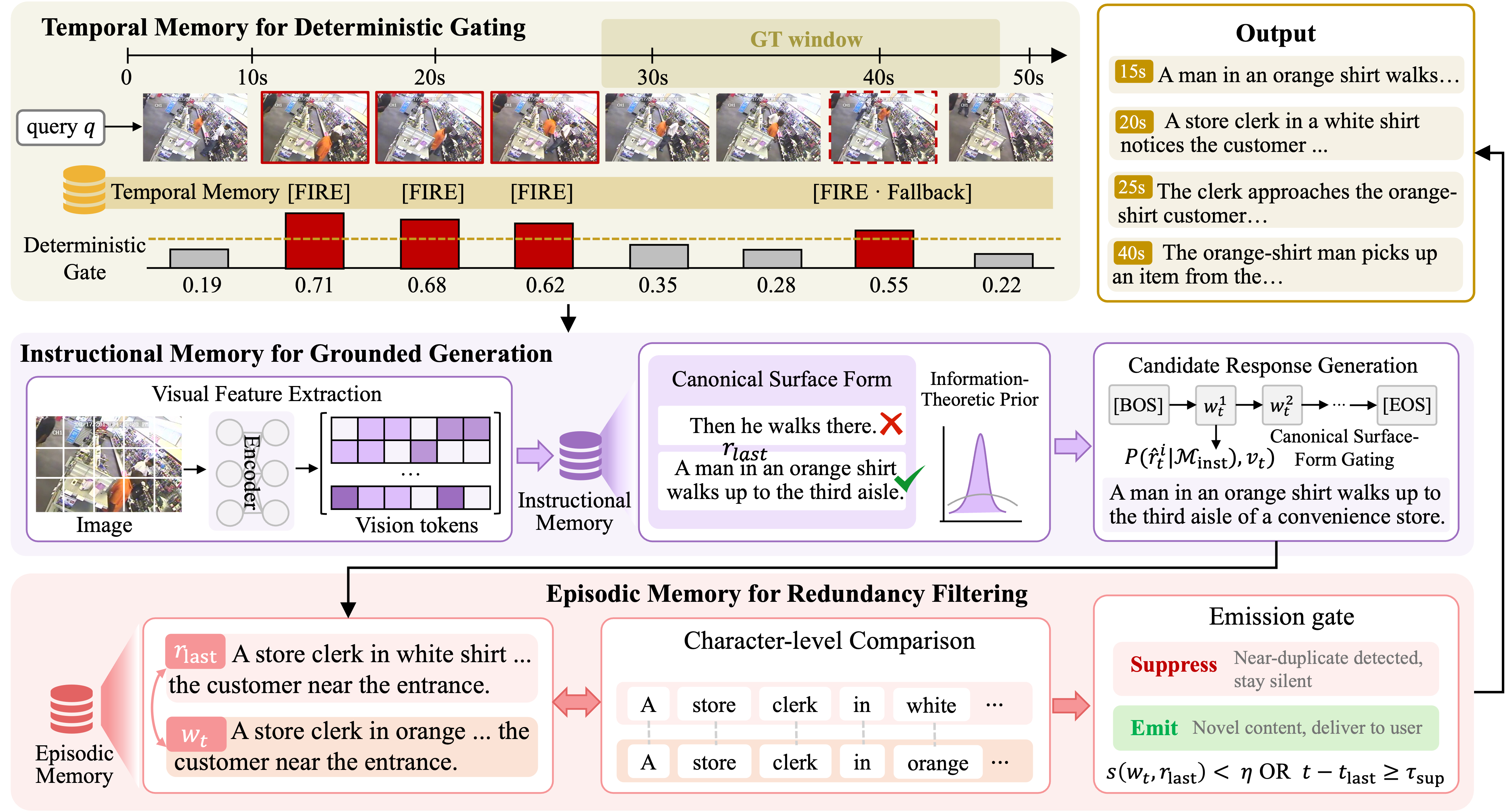}
  \caption{Framework of our ProV-Mem method.}
\vspace{-1.5em}
\label{fig:framework}
\end{figure}

\section{Method}
\label{sec:method}

This section presents the mathematical and structural design of ProV-Mem shown in Figure~\ref{fig:framework}. We first formalize Proactive-VQA under streaming causality constraints. We then show that the proactive decision can be decomposed into three coordinated stages, with an explicit suboptimality bound governed by measurable stagewise violation terms. Next, we describe the three lightweight memory tiers that implement this decomposition in practice. Finally, we introduce CAUC, a cost-aware refinement of PAUC that penalizes redundant, overly frequent, and unnecessarily long emissions.

\subsection{Problem Formulation}
\label{sec:formulation}

A Proactive-VQA instance provides an unbounded video stream
$\mathcal{V}=(x_1,x_2,\ldots,x_T)$ arriving frame by frame, where $x_t$ denotes
the visual frame received at step $t\in\{1,\ldots,T\}$. Although the underlying
stream is conceptually unbounded, $T$ denotes the finite prefix observed in an
evaluation run; thus, $\mathcal{V}=X_{\leq T}$, where
$X_{\leq t}=(x_1,\ldots,x_t)$ is the causal prefix available at step $t$. The
query $q$ is issued at $t=0$, while the ground-truth reference turns
$\{(s_j,e_j,r_j^*)\}_{j=1}^{n}$ are reserved for post-hoc evaluation only.
Here, $s_j$ and $e_j$ denote the start and end timestamps of the $j$-th target
window, and $r_j^*$ is its reference response.
At each step, the agent observes $X_{\leq t}$ and its emitted-response history
$H_{<t}=\{(t',a_{t'}):t'<t,\ a_{t'}\neq\varnothing\}$, and produces an
irrevocable action $a_t\in\Sigma^*\cup\{\varnothing\}$. Here, $\Sigma^*$ is the
set of textual responses over vocabulary $\Sigma$, and $\varnothing$ denotes an
internal no-emission action. The full causal state is
$S_{\leq t}=(X_{\leq t},q,H_{<t})$. The policy depends only on $S_{\leq t}$: future frames are inaccessible and emitted responses are irrevocable. For the
subsequent analysis, $P_\delta(\cdot\mid S_{\leq t})$ denotes the conditional distribution induced by this policy, ``No Reply'' is mapped to $\varnothing$ before evaluation.

\subsection{Decision Decomposition and Suboptimality Bound}
\label{sec:decomposition}

Predicting $a_t$ directly from the full causal state $S_{\leq t}$ entangles
three decisions: when to speak, what to say, and whether to emit. These
decisions depend on temporal firing state, immediate visual grounding, and
recent emitted text, respectively, motivating a modular stagewise process.

\textbf{Structural Decomposition.}
We equip the agent with three memory tiers around a shared VLM backbone.
Temporal-Trigger Memory $\mathcal{M}_{\mathrm{trig}}$ maintains
$\mathcal{M}_{\mathrm{trig},t}=(p_t,c_{\mathrm{sil},t})$, where
$p_t\in\{0,\ldots,K-1\}$ is the cadence phase modulo the trigger stride $K$,
and $c_{\mathrm{sil},t}$ counts consecutive regular trigger opportunities that
end without an emitted response.
It produces a binary trigger decision $f_t\in\{0,1\}$. Instructional Memory
$\mathcal{M}_{\mathrm{inst}}$ stores a fixed prompting prior that generates a
candidate response from the active window $V_t$, with encoded representation
$z_t=\phi(V_t)$. When $f_t=1$, the backbone generates
$\hat r_t\in\Sigma^*$; otherwise, $\hat r_t=\varnothing$. Episodic Memory
$\mathcal{M}_{\mathrm{ep}}$ stores
$\mathcal{M}_{\mathrm{ep},t}=(\bar r_{\mathrm{last},t},
\tau_{\mathrm{last},t}^{\mathrm{emit}})$, where
$\bar r_{\mathrm{last},t}$ is a fixed-length prefix of the most recent emitted
response and $\tau_{\mathrm{last},t}^{\mathrm{emit}}$ is its timestamp. It
produces an emission decision $e_t\in\{0,1\}$ when $f_t=1$, and $e_t=0$
otherwise. Although the deployed gates are deterministic,
$P_{\delta_{\mathrm{dec}}}$ denotes the distribution induced by the
three-stage decomposed process, with $P_{\mathrm{trig}}$,
$P_{\mathrm{gen}}$, and $P_{\mathrm{emit}}$ as its stagewise factors. The
latent decision chain factorizes as:
\begin{equation}
\small
\label{eq:decomp}
P_{\delta_{\mathrm{dec}}}(f_t,\hat r_t,e_t\mid S_{\leq t})
=
P_{\mathrm{trig}}(f_t\mid\mathcal{M}_{\mathrm{trig},t})
\cdot
P_{\mathrm{gen}}(\hat r_t\mid f_t,\mathcal{M}_{\mathrm{inst}},z_t)
\cdot
P_{\mathrm{emit}}(e_t\mid f_t,\hat r_t,\mathcal{M}_{\mathrm{ep},t}).
\end{equation}
The final action is $a_t=\hat r_t$ when $f_t=e_t=1$, and
$a_t=\varnothing$ otherwise. Thus, Eq.~\ref{eq:decomp} factorizes the latent
decision chain rather than the final action distribution.

\textbf{Conditional-Sufficiency Assumption.}
Eq.~\ref{eq:decomp} restricts each decision stage to its designated local
representation rather than the full causal state. We use this as an idealized
reference condition: the local representations preserve the conditional
distributions of the corresponding stage decisions. In practice, deviations
from this condition are captured by the stagewise discrepancies below.

\begin{assumption}[Conditional Sufficiency]
\label{asm:suff}
Replacing the full causal state with
$\mathcal{M}_{\mathrm{trig},t}$,
$(\mathcal{M}_{\mathrm{inst}},z_t)$, and
$(\hat r_t,\mathcal{M}_{\mathrm{ep},t})$ preserves the conditional
distributions of the trigger decision $f_t$, the candidate response
$\hat r_t$ given $f_t=1$, and the emission decision $e_t$ given
$f_t=1$ and $\hat r_t$, respectively.
\end{assumption}

\textbf{Suboptimality Bound.}
Let $\tau_t$ denote the wall-clock timestamp at step $t$, and let $g(t)$ denote
the index of the ground-truth target window containing $\tau_t$. If no target
window is active, we set $g(t)=\varnothing$ and
$r_{\varnothing}^*=\varnothing$. Let
$U(\delta)=\mathbb{E}[\sum_{t=1}^{T}u(a_t,r_{g(t)}^*)]$ denote the expected
cumulative utility of decision process $\delta$, where the expectation is taken
under the common-state counterfactual evaluation and
$u(\cdot,\cdot)\in[0,c]$ is a bounded per-step utility. We compare the joint and
decomposed processes on the same causal state at each step. Let
$\epsilon_{\mathrm{trig}}$, $\epsilon_{\mathrm{gen}}$, and
$\epsilon_{\mathrm{emit}}$ denote the average probabilities that the two
processes disagree at the trigger, generation, and emission stages,
respectively. Let $\delta_{\mathrm{joint}}^*$ denote the optimal process with
access to the full causal state, and let $\delta_{\mathrm{dec}}^*$ denote the
process induced by the three local memory stages.

\begin{lemma}[Stagewise Approximation Bound]
\label{lem:bound}
Under the common-state counterfactual evaluation,
$U(\delta_{\mathrm{joint}}^*)-
U(\delta_{\mathrm{dec}}^*)\leq
2cT(\epsilon_{\mathrm{trig}}+
\epsilon_{\mathrm{gen}}+
\epsilon_{\mathrm{emit}})$.
\end{lemma}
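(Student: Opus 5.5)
The plan is a per-step coupling argument. Under the common-state counterfactual evaluation, the joint optimum $\delta_{\mathrm{joint}}^*$ and the decomposed process $\delta_{\mathrm{dec}}^*$ are evaluated on the same causal state $S_{\leq t}$ at every step $t$. This lets me write
\[
U(\delta_{\mathrm{joint}}^*)-U(\delta_{\mathrm{dec}}^*)=\sum_{t=1}^{T}\mathbb{E}_{S_{\leq t}}\Big[\mathbb{E}\big[u(a_t^{\mathrm{joint}},r_{g(t)}^*)-u(a_t^{\mathrm{dec}},r_{g(t)}^*)\,\big|\,S_{\leq t}\big]\Big].
\]
So it suffices to bound each per-step difference by $2c$ times the stagewise disagreement probabilities at that step. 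Summing and averaging over $t$ then turns those probabilities into $\epsilon_{\mathrm{trig}},\epsilon_{\mathrm{gen}},\epsilon_{\mathrm{emit}}$ with a factor $T$.

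\textbf{Step 1 (reduce to total variation).} Since $u\in[0,c]$, I use the standard bound
\[
|\mathbb{E}_P u-\mathbb{E}_Q u|\le 2c\,\mathrm{TV}(P,Q).
\]
I take the crude constant $2c$ from $\sup|u|\le c$ with $\sum|P-Q|=2\mathrm{TV}$, which matches the stated constant. By the data-processing inequality, the final action $a_t$ is a deterministic function of the latent chain $(f_t,\hat r_t,e_t)$. Hence $\mathrm{TV}$ over actions is at most $\mathrm{TV}$ over the latent triple, and it suffices to compare the two chain distributions given $S_{\leq t}$.

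\textbf{Step 2 (hybrid / telescoping over stages).} I write the joint chain as
\[
P_{\mathrm{joint}}(f\mid S)\,P_{\mathrm{joint}}(\hat r\mid f,S)\,P_{\mathrm{joint}}(e\mid f,\hat r,S)
\]
and the decomposed chain as in Eq.~\ref{eq:decomp}. I then introduce two hybrid processes that swap one factor at a time. By the triangle inequality and the chain-rule bound for TV of products of kernels, the total TV is at most the sum of three terms. The first is the TV of the trigger factors. The second is the expected TV of the generation factors given $f_t=1$ (when $f_t=0$ both give $\varnothing$). The third is the expected TV of the emission factors given $f_t=1$ and $\hat r_t$. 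Equivalently, I can construct a maximal coupling stage by stage: the two processes produce the same triple unless they disagree at some stage, and a union bound gives
\[
\Pr[a_t^{\mathrm{joint}}\neq a_t^{\mathrm{dec}}]\le p^{\mathrm{trig}}_t+p^{\mathrm{gen}}_t+p^{\mathrm{emit}}_t.
\]
I will identify these disagreement probabilities with the quantities averaged in $\epsilon_{\cdot}$, so that $\sum_t p_t^{\cdot}=T\epsilon_{\cdot}$.

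\textbf{Step 3 (assemble).} Under the coupling, the per-step utility gap is at most $c\cdot\Pr[a_t^{\mathrm{joint}}\neq a_t^{\mathrm{dec}}]$, which is even tighter than needed. Under the TV route I get $2c$ times the sum of the three disagreement probabilities. Summing over $t$ yields the claim. Assumption~\ref{asm:suff} is what makes the idealized discrepancies vanish and thereby justifies the decomposition. However, the bound itself needs only the definitions of the $\epsilon$'s, since all deviation from sufficiency is absorbed into them.

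The main obstacle is Step 2: making the conditioning in the generation and emission discrepancies consistent. The later-stage disagreement must be measured under inputs that agree at earlier stages, i.e., under the joint process's marginal over $f_t$ and $\hat r_t$. Otherwise cross terms appear. I would handle this by defining $\epsilon_{\mathrm{gen}}$ and $\epsilon_{\mathrm{emit}}$ as disagreement probabilities on the event that earlier stages coincide under the stagewise maximal coupling. This is the natural reading of "the two processes disagree at the generation/emission stage," and with it the union bound goes through cleanly.
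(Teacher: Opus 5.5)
Your proposal is correct and takes essentially the same route as the paper's proof. Both bound the per-step utility gap by $c$ times the $\ell_1$ distance (equivalently $2c$ times the total variation) between action distributions, pass to the latent chain $(f_t,\hat r_t,e_t)$ through the deterministic action map, union-bound the stagewise disagreements under the common-state coupling, and sum over $t$. Your bookkeeping $\sum_t p_t^{\cdot}=T\epsilon_{\cdot}$ is slightly cleaner than the paper's, which asserts the averaged bound at each individual step, and your remark that the coupling alone yields the tighter constant $c$ is correct.
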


The bound follows because a final-action disagreement requires a disagreement
at at least one stage, while the per-step utility is bounded by $c$. The
surface-semantic collision rate $\rho$ denotes the probability that a
semantically novel candidate is suppressed by the episodic gate because of high
lexical similarity to the most recent emitted response. It is reported
separately as an interpretable diagnostic and is not treated as the complete
emission-stage discrepancy. The full derivation is provided in
Appendix~\ref{app:proof}.

\subsection{ProV-Mem: Co-Designed Lightweight Memory Tiers}
\label{sec:tiers}

We now instantiate Eq.~\ref{eq:decomp} using deterministic memory updates
around a frozen VLM backbone. The input stream is processed frame by frame, and $K$ denotes the regular trigger stride measured in sampled frames, such that a regular trigger opportunity occurs every $K$ sampled frames. Throughout this
subsection, $t$ indexes streaming steps, while regular trigger opportunities
occur whenever the cadence phase satisfies $p_t=0$.

\textbf{Temporal Memory for Deterministic Gating.}
The temporal-trigger memory maintains
$\mathcal{M}_{\mathrm{trig},t}=(p_t,c_{\mathrm{sil},t})$, where $p_t$ is the
cadence phase modulo the trigger stride $K$ and $c_{\mathrm{sil},t}$ counts
consecutive regular trigger opportunities without an emitted response. The
deployed trigger gate is deterministic:
$f_t
=
\mathds{1}\!\left[
p_t=0
\;\lor\;
c_{\mathrm{sil},t}\geq\kappa
\right]$.
Thus, the backbone is queried at regular intervals of $K$ sampled frames. When
the silence counter reaches $\kappa$, the recovery condition overrides the
regular cadence and initiates an additional query to prevent prolonged
inactivity. Here, $\kappa$ controls the number of consecutive silent regular
trigger opportunities tolerated before recovery.

\textbf{Instructional Memory for Grounded Generation.}
The instructional memory $\mathcal{M}_{\mathrm{inst}}$ stores the fixed
prompting prior used after a trigger. Given the active visual window $V_t$ and
its encoded representation $z_t=\phi(V_t)$, the frozen backbone generates
$\hat r_t\in\Sigma^*$. The prompt encourages concise, self-contained responses
grounded in the current window and promotes a consistent surface form for
semantically similar events, improving compatibility with the downstream
lexical filter.

\textbf{Episodic Memory for Redundancy Filtering.}
The episodic memory stores
$\mathcal{M}_{\mathrm{ep},t}=
(\bar r_{\mathrm{last},t},\tau_{\mathrm{last},t}^{\mathrm{emit}})$, where
$\bar r_{\mathrm{last},t}$ is the first $L$ characters of the most recent
emitted response and $\tau_{\mathrm{last},t}^{\mathrm{emit}}$ is its
wall-clock timestamp. At the beginning of a stream, we initialize
$\bar r_{\mathrm{last},0}=\varnothing$ and
$\tau_{\mathrm{last},0}^{\mathrm{emit}}=-\infty$. For a candidate response
$\hat r_t$, let $\bar{\hat r}_t=\hat r_t[:L]$ denote its comparison prefix. We
compute the normalized prefix-level lexical similarity as
\begin{equation}
\label{eq:sim}
s(\bar{\hat r}_t,\bar r_{\mathrm{last},t})
=
{
2\,\mathrm{LCS}\left(\bar{\hat r}_t,\bar r_{\mathrm{last},t}\right)
}/\left({
|\bar{\hat r}_t|+|\bar r_{\mathrm{last},t}|
}\right),
\end{equation}
where $\mathrm{LCS}(u,v)$ denotes the length of the longest common subsequence
between strings $u$ and $v$. If no previous response exists, we set the
similarity to zero. The deterministic emission gate is
\begin{equation}
\label{eq:emit}
e_t
=
\mathds{1}\!\left[
f_t=1
\;\wedge\;
\left(
s(\bar{\hat r}_t,\bar r_{\mathrm{last},t})<\eta
\;\lor\;
\tau_t-\tau_{\mathrm{last},t}^{\mathrm{emit}}
>
\tau_{\mathrm{sup}}
\right)
\right],
\end{equation}
where $\eta\in(0,1)$ is the similarity threshold and
$\tau_{\mathrm{sup}}$ is the suppression cooldown. Thus, lexically similar
responses can be emitted again only after a sufficiently long temporal gap.

\textbf{Inter-Tier Feedback Loop.}
At each streaming step, the cadence phase advances as
$p_{t+1}=(p_t+1)\bmod K$. When $f_t=0$, the action is
$a_t=\varnothing$ and the episodic state remains unchanged. When $f_t=1$ and
$e_t=1$, the system emits $a_t=\hat r_t$, updates the episodic memory as
$(\bar r_{\mathrm{last},t+1},
\tau_{\mathrm{last},t+1}^{\mathrm{emit}})
\leftarrow
(\bar{\hat r}_t,\tau_t)$, and resets the silence counter to zero. When a
regular trigger opportunity produces no emission, the silence counter is
incremented. Once $c_{\mathrm{sil},t}\geq\kappa$, the recovery query is issued
with $P_{\mathrm{sys}}^{\mathrm{force}}$ and the silence counter is reset after
that recovery attempt. Since the temporal state, prompting prior, and episodic
prefix remain fixed-size or bounded-size, ProV-Mem operates as a lightweight
training-free wrapper around the frozen VLM backbone.

\subsection{CAUC: Cost-Aware Evaluation Metric}
\label{sec:cauc}

PAUC evaluates the timeliness and semantic quality of proactive responses, but
does not explicitly account for redundancy, emission frequency, or response
length. We therefore introduce \textbf{C}ost-\textbf{A}ware \textbf{A}rea
\textbf{U}nder the \textbf{C}urve (CAUC), which reweights the quality score used
by PAUC with three explicit penalties. For an emitted response $a_t$, let $q_t$
denote the semantic quality assigned by the official PAUC evaluator to the
cumulative response history available after this emission. The redundancy penalty
is $P_{\mathrm{red},t}=0$ when $H_{<t}$ is empty and otherwise
$P_{\mathrm{red},t}=
\max_{(t',a)\in H_{<t}}s(a_t[:L],a[:L])$, where $s(\cdot,\cdot)$ is the
normalized prefix-level LCS similarity defined in Eq.~\ref{eq:sim}. Unlike the
online episodic gate, which stores only the most recent response, CAUC uses the
complete emitted history $H_{<t}$ for evaluation. The frequency penalty is
$P_{\mathrm{freq},t}=0$ for the first emission and
$P_{\mathrm{freq},t}=\exp(-\Delta\tau_t/\tau_{\mathrm{cooldown}})$ otherwise,
where $\Delta\tau_t$ is the elapsed wall-clock time since the most recent
emitted response. The length penalty is
$P_{\mathrm{len},t}=
\max(0,|a_t|_{\mathrm{tok}}-L_{\mathrm{ref}})/L_{\mathrm{ref}}$, where
$|a_t|_{\mathrm{tok}}$ denotes the output length in tokens. We define the
cost-adjusted quality as
\begin{equation}
\label{eq:cauc}
\tilde q_t
=
\left[
q_t
\left(
1-\alpha P_{\mathrm{red},t}
-\beta P_{\mathrm{freq},t}
-\gamma P_{\mathrm{len},t}
\right)
\right]_+,
\end{equation}
where $\alpha,\beta,\gamma\in[0,1]$ are penalty weights. CAUC reuses the temporal integration procedure and timeliness--correctness trade-off parameter $\omega$ of PAUC, replacing the original quality score $q_t$ with $\tilde q_t$. Thus, CAUC preserves the temporal semantics of PAUC while explicitly discounting redundant, overly frequent, and unnecessarily long responses. For presentation, the resulting CAUC score is scaled to the range $[0,100]$.

\section{Experiments}
\label{sec:exp}

\subsection{Setup}

\textbf{Experimental Setup and Evaluation.}
We evaluate ProV-Mem on the four scenarios of ProactiveVideoQA
\citep{wang2025proactivevideoqa}, reporting PAUC at $\omega = 0.5$ alongside our proposed CAUC, using the official GPT-4.1 evaluator. We additionally report the average per-response inference time (Time), measured as the wall-clock latency from processing the current trigger window to producing the final emitted response. Under a frozen Qwen2.5-VL-7B backbone \citep{qwen2025qwen25vl}, trigger parameters $(K,\kappa)$ and episodic memory settings are adapted per scenario: a suppression cooldown $\tau_{\mathrm{sup}} = 30$\,s on EGO for recurring substeps, a silence-recovery $\kappa = 4$ on TV, and an event cooldown $\tau_{\mathrm{event}} = 8$\,s with scouting prompt $P_{\mathrm{scout}}$ on VAD. Full configurations are provided in Appendix~\ref{app-exp:setting}.

\textbf{Baselines.}
We compare against three groups of baselines.  \textbf{1) Offline closed-source VLMs} include the proprietary models originally evaluated in ProactiveVideoQA \citep{wang2025proactivevideoqa}, namely GPT-4o-mini \citep{achiam2023gpt}, Gemini-2.5-Pro, Gemini-2.5-Flash \citep{team2024gemini}, and GLM-4.5V \citep{glm2024chatglm}. 
\textbf{2) Offline open-source VLMs} are adapted to the proactive setting via chunked inference, including Qwen2.5-VL-7B \citep{qwen2025qwen25vl}, LLaVA-OV-7B \citep{li2024llava}, LongVA-7B \citep{zhang2024long}, InternVL-2.5-8B \citep{chen2024internvl25}, and IXC-2.5-7B \citep{zhang2024internlm}.
\textbf{3) Online Streaming VLMs} with trained triggering mechanisms include MMDuet and its variant denoted in the original paper as MMDuet + rm. ass., where rm. ass. refers to removing previous assistant responses.\citep{wang2025videollm}, MMDuet2 (RL) \citep{wang2026mmduet2}, and VideoLLM-Online \citep{chen2024videollm}. 
To ensure strict evaluation alignment and fair comparison, we independently reproduce and rerun all baseline systems under a unified evaluation protocol.

\begin{table*}[t]
\centering
\caption{Main results on ProactiveVideoQA.
Best results are in \textbf{bold}.}
\vspace{-0.5em}
\label{tab:main}
\renewcommand{\arraystretch}{1} 
\resizebox{0.95\textwidth}{!}{%
\begin{tabular}{lccccccccr}
\toprule
\multirow{2}{*}{System}
  & \multicolumn{4}{c}{PAUC ($\omega{=}0.5$, $\uparrow$)}
  & \multicolumn{4}{c}{CAUC ($\uparrow$)}
  & \multirow{2}{*}{Time (s) $\downarrow$} \\
\cmidrule(lr){2-5}\cmidrule(lr){6-9}
  & WEB & EGO & TV & VAD & WEB & EGO & TV & VAD & \\
\midrule
\multicolumn{10}{l}{\textit{\textbf{Offline Closed-Source VLMs}}} \\
\rowcolor{gray!8}
GLM-4.5V          & 40.60 & 50.28 & 31.99 & 36.57 & 29.29 & 38.16 & 28.22 & 30.96 & 45.28 \\
\rowcolor{gray!8}
Gemini-2.5-Pro    & 36.88 & 41.27 & 32.62 & 36.92 & 26.80 & 32.25 & 29.35 & 30.51 & 20.00 \\
\rowcolor{gray!8}
Gemini-2.5-Flash  & 40.15 & 42.11 & 28.81 & 35.98 & 29.31 & 33.23 & 26.46 & 30.16 & 17.20 \\
\rowcolor{gray!8}
GPT-4o-mini       & 38.03 & 33.48 & 27.83 & 35.28 & 28.46 & 29.72 & 26.37 & 30.75 & 21.52 \\
\midrule
\multicolumn{10}{l}{\textit{\textbf{Offline Open-Source VLMs}}} \\
\rowcolor{gray!4}
InternVL-2.5-8B   & 36.57 & 48.46 & 32.84 & 29.32 & 26.84 & 36.82 & 27.34 & 26.16 & 8.12 \\
\rowcolor{gray!4}
Qwen2.5-VL-7B     & 35.82 & 39.27 & 29.46 & 23.25 & 25.65 & 31.36 & 23.83 & 19.10 & 7.46 \\
\rowcolor{gray!4}
LLaVA-OV-7B       & 34.31 & 23.57 & 27.23 & 26.75 & 25.26 & 22.81 & 25.64 & 24.82 & 94.87 \\
\rowcolor{gray!4}
LongVA-7B         & 27.11 & 25.46 & 24.95 & 24.88 & 25.75 & 25.22 & 24.88 & 24.86 & 6.00 \\
\rowcolor{gray!4}
IXC-2.5-7B        & 30.14 & 13.19 & 14.68 & 16.36 & 22.48 & 10.51 & 12.04 & 13.90 & 20.23 \\
\midrule
\multicolumn{10}{l}{\textit{\textbf{Online Streaming VLMs}}} \\
VideoLLM-Online        & 24.35 & 25.01 & 22.57 & 25.00 & 22.91 & 24.98 & 22.21 & 25.00 & 10.69 \\
MMDuet                 & 34.78 & 36.56 & 20.58 & 22.76 & 29.00 & 26.84 & 14.53 & 18.64 & 3.03 \\
MMDuet + rm.\ ass.     & 39.85 & 46.37 & 26.33 & \textbf{36.87}& 26.38 & 25.87 & 18.42 & 18.99 & 1.90 \\
MMDuet2 (RL)           & 54.83 & 31.58 & 41.69 & 28.89 & 40.29 & 28.41 & 34.89 & 25.97 & 20.80 \\
\midrule
\rowcolor{blue!12}
\textbf{ProV-Mem (Ours)} & \textbf{57.90} & \textbf{70.87} & \textbf{57.43} & 32.10
                         & \textbf{40.59} & \textbf{38.20} & \textbf{40.22} & \textbf{31.68}
                         & \textbf{1.30} \\
\hdashline

\bottomrule
\end{tabular}}
\end{table*}

\subsection{Main Results}
To evaluate the efficacy of our ProV-Mem, we compare ProV-Mem against representative baselines. As shown in Table~\ref{tab:main}, the results expose the structural limitations of parametric streaming architectures across three dimensions. First, online models equipped with explicit streaming supervision are consistently outperformed by static offline baselines on multiple scenarios, a failure that traces to a fundamental architectural constraint: parametric designs must compress long-horizon visual history into a fixed-capacity hidden state to sustain low latency, and this compression irreversibly degrades the representational fidelity required for complex video reasoning. ProV-Mem circumvents this capacity-latency tension by externalising temporal state management into non-parametric memory tiers, leaving the representational capacity of the frozen backbone intact. Second, the cross-domain performance collapse exhibited by RL-trained trigger policies reveals a deeper generalisation deficiency: reinforcement learning overfits trigger thresholds to the temporal statistics of the training distribution, and any shift in event density or temporal structure between domains causes the learned trigger head to fail. Third, the systematic divergence between PAUC and CAUC across competing baselines provides evidence of metric reward-hacking: because PAUC does not penalise redundant emissions, several systems opportunistically over-trigger on detected segments to inflate their scores, while the cost-aware CAUC metric corrects this inflation and restores the true performance ordering. The fact that ProV-Mem simultaneously leads on both metrics across three of the four scenarios suggsets that its gains arise from precise content management and semantic deduplication rather than unconstrained firing frequency. 
The main exception is the VAD PAUC score, which suggests that anomaly detection may require stronger parametric event priors than surface-level lexical memory can provide. This result marks the current boundary of the non-parametric design and points to the temporal memory tier as the most promising direction for future extension.

\subsection{Ablation Study}
\begin{wraptable}[12]{R}{8.2cm}
\centering
\small
\begingroup
\setlength{\tabcolsep}{1.8pt}   
\renewcommand{\arraystretch}{1.12}
\caption{Ablation study on ProactiveVideoQA.}
\label{tab:ablation_wrap}
\begin{tabular}{l *{8}{c}}
\toprule
\multirow{2}{*}{\textbf{Variant}}
  & \multicolumn{4}{c}{\textbf{PAUC} ($\omega{=}0.5$, $\uparrow$)}
  & \multicolumn{4}{c}{\textbf{CAUC} ($\uparrow$)} \\
\cmidrule(lr){2-5}\cmidrule(lr){6-9}
  & WEB & EGO & TV & VAD & WEB & EGO & TV & VAD \\
\midrule
\rowcolor{blue!12}
\textbf{Full}
  & {57.90} &{70.87} & {57.43} & {32.10}
  & \textbf{40.59} & \textbf{38.20} & \textbf{40.22} & \textbf{31.68} \\
\midrule
\rowcolor{gray!10}
$-\mathcal{M}_{\mathrm{trig}}$
  & 37.51 & 66.25 & 55.91 & 32.76
  & 33.10 & 31.65 & 40.16 & 27.19 \\
\rowcolor{gray!5}
$-\mathcal{M}_{\mathrm{inst}}$
  & 45.06 & 58.93 & 55.03 & 31.08
  & 34.92 & 35.34 & 39.14 & 27.20 \\
\rowcolor{gray!10}
$-\mathcal{M}_{\mathrm{ep}}$
  & \textbf{60.14} & \textbf{71.78} & \textbf{62.64} & \textbf{33.38}
  & 40.12 & 37.80 & 38.83 & 26.33 \\
\bottomrule
\end{tabular}
\endgroup
\end{wraptable}

To verify the empirical gains of ProV Mem, we conduct the ablation study shown in Table~\ref{tab:ablation_wrap}. The Full model delivers the strongest overall cost-aware trade-off, achieving the highest CAUC in all four evaluation scenarios. Although removing the episodic memory tier, $\mathcal{M}_{\mathrm{ep}}$, can slightly improve raw PAUC by allowing more frequent emissions, this benefit comes with substantially higher redundancy and response cost. The results therefore suggest that episodic deduplication plays an important role in maintaining efficiency under continuous streaming evaluation, and that the Full configuration provides the most reliable balance between utility and cost.

The other two tiers govern complementary aspects of the decision process. Removing the instructional memory tier, $\mathcal{M}_{\mathrm{inst}}$, leads to a consistent decline in both metrics across scenarios, indicating that stable prompt-level guidance is important for maintaining response quality. Removing the temporal trigger tier, $\mathcal{M}_{\mathrm{trig}}$, reveals stronger temporal sensitivity in domains with dense or rapidly changing events. On WEB, the absence of this tier markedly reduces PAUC, showing that dense event streams require timely state updates. On EGO, removing $\mathcal{M}_{\mathrm{trig}}$ lowers both PAUC and CAUC while increasing repetition, which suggests that temporal triggering is also important for controlling redundant backbone invocations. Taken together, these findings indicate that the three memory tiers serve complementary functions, and that the Full configuration offers the most stable performance.

\begin{figure}[h]
  \centering
  \includegraphics[width=\linewidth]{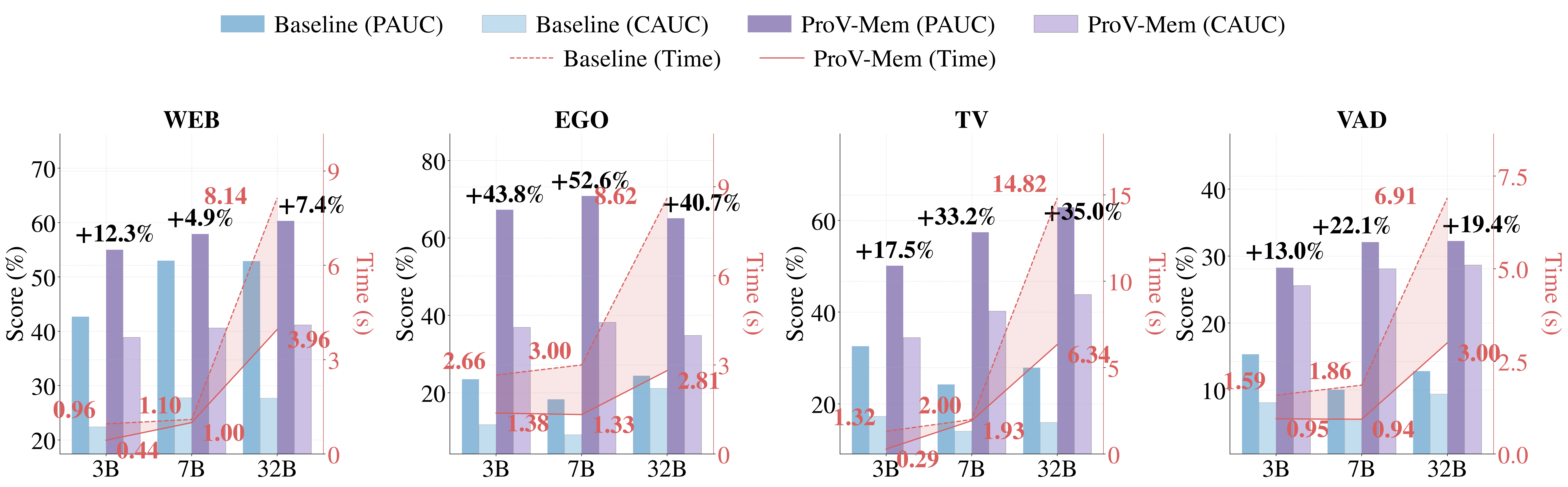}
  \caption{Backbone scaling on ProactiveVideoQA with the ProV-Mem.}
  \label{fig:backbone}
\vspace{-0.6em}
\end{figure}

\subsection{Effect of Backbone Scale}

To assess whether ProV-Mem generalizes across backbone scales, we evaluate Qwen2.5-VL at 3B, 7B, and 32B parameters with fixed memory hyperparameters. Figure~\ref{fig:backbone} reveals two key findings.
First, the baseline exhibits non-monotonic scaling curves on several scenarios, a symptom of its monolithic design. As backbone capacity grows, the implicit context buffer must simultaneously handle temporal reasoning and semantic generation, and these two objectives interfere under the pressure of more complex representations, causing accuracy to plateau or regress on high-density streams. ProV-Mem, by contrast, scales monotonically across all scenarios because the non-parametric memory tiers absorb all temporal state management externally, allowing the backbone to translate increased capacity purely into stronger semantic grounding without concurrently maintaining memory state. Second, the computational efficiency advantage conferred by the temporal-trigger gate compounds with scale. By suppressing redundant backbone invocations on silent or uninformative windows, ProV-Mem reduces its effective inference workload sub-linearly relative to the baseline, and this saving becomes increasingly pronounced as the per-forward-pass cost rises with parameter count. Taken together, these results demonstrate that ProV-Mem functions as a structural scaffold that converts backbone scaling into monotonic quality improvement while bounding the growth rate of computational overhead, two properties that parametric streaming designs cannot provide at the architectural level regardless of the underlying model size.

\subsection{Sensitivity of Memory Hyperparameters}
\label{sec:sensitivity}

\begin{figure}[!t]
  \centering
  \includegraphics[width=\linewidth]{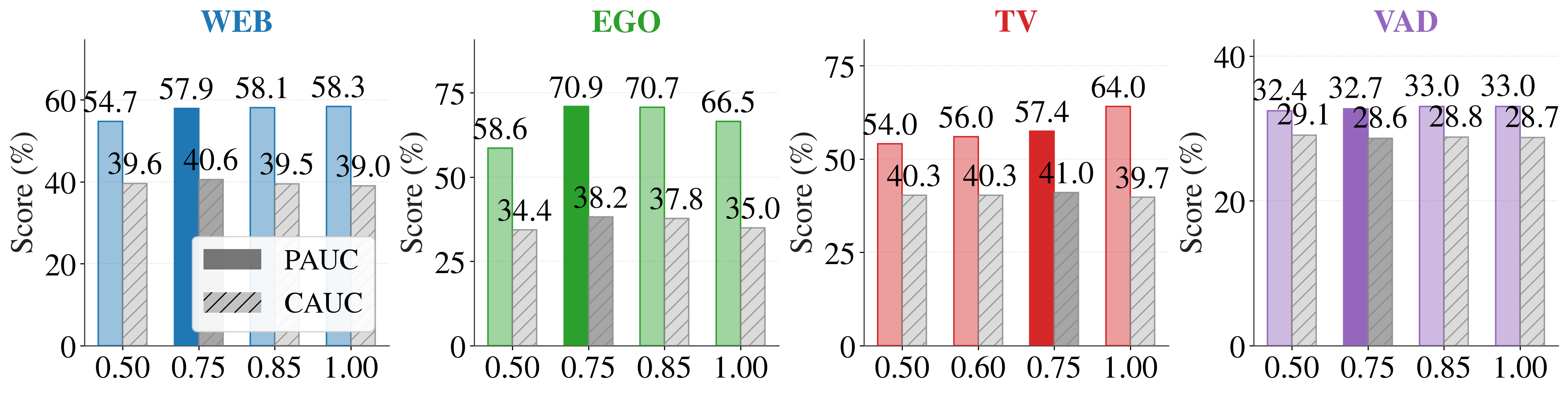}
  \caption{Effect of the similarity threshold $\eta$ across  ProactiveVideoQA scenarios.}
  \label{fig:h2}
\vspace{-0.5em}
\end{figure}

\begin{figure}[!t]
  \centering
  \includegraphics[width=\linewidth]{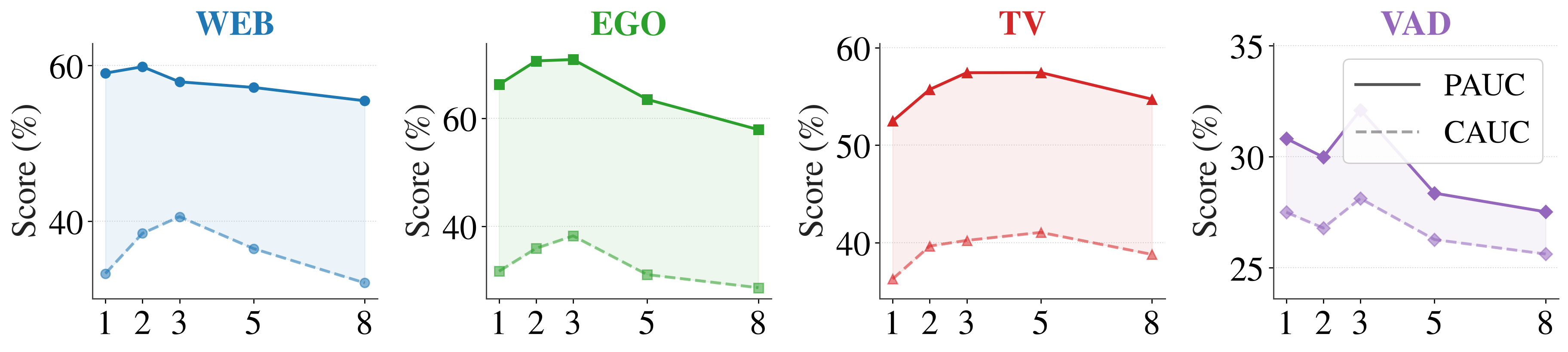}
  \caption{Effect of the regular trigger stride $K$ across all four ProactiveVideoQA scenarios.}
  \label{fig:K}
\vspace{-1.3em}
\end{figure}

\textbf{Effect of similarity threshold.}
The similarity threshold governs a fundamental trade-off between two error terms in the suboptimality bound. Tightening it reduces the collision rate $\rho$ but risks suppressing semantically novel candidates and inflating $\epsilon_{gen}$, while relaxing it admits near-duplicate emissions at the expense of cost efficiency. Figure~\ref{fig:h2} reveals that this trade-off manifests differently across domains in ways that carry theoretical weight beyond hyperparameter selection. The most diagnostic finding is on EGO, where raw PAUC also degrades when deduplication is fully disabled, demonstrating that procedural repetitions actively dilute the matching signal by occupying the candidate set's allocation budget rather than merely inflating redundancy cost. TV provides the clearest evidence of the PAUC-CAUC asymmetry: the subtitle-anchored event structure allows repetitive fires to achieve coverage through temporal saturation rather than semantic precision, a form of metric exploitation that CAUC is specifically designed to penalise. 
VAD's complete insensitivity to $\eta$ suggests that its sparse event structure is governed by the temporal cooldown $\tau_{\text{event}}$ rather than the per-window similarity threshold, making $\eta$ a secondary factor in this domain.
These results validate $\eta=0.75$ as a robust cross-scenario default and establish that PAUC and CAUC measure fundamentally different quality axes that cannot be collapsed without discarding diagnostic information.

\textbf{Effect of trigger stride.}
The trigger stride $K$ determines how often the model refreshes its visual evidence, creating a trade-off between temporal coverage and query efficiency. As shown in Figure~\ref{fig:K}, WEB and EGO are most sensitive to larger strides: their short, dense events require frequent trigger opportunities, and less frequent updates can leave relevant visual changes unobserved until after the event has passed. TV exhibits a broader plateau because subtitle-aligned
events typically persist over longer intervals, making its response quality less sensitive to modest delays in visual updates. VAD is comparatively stable over the tested range, since the scouting stage provides an additional event-level filter before full response generation. These differences show that trigger
stride should match the temporal persistence of the target events. Across the four scenarios, $K=3$ provides the most favorable balance between event coverage and redundant backbone queries.

\begin{figure}[!t]
  \centering
  \includegraphics[width=\linewidth]{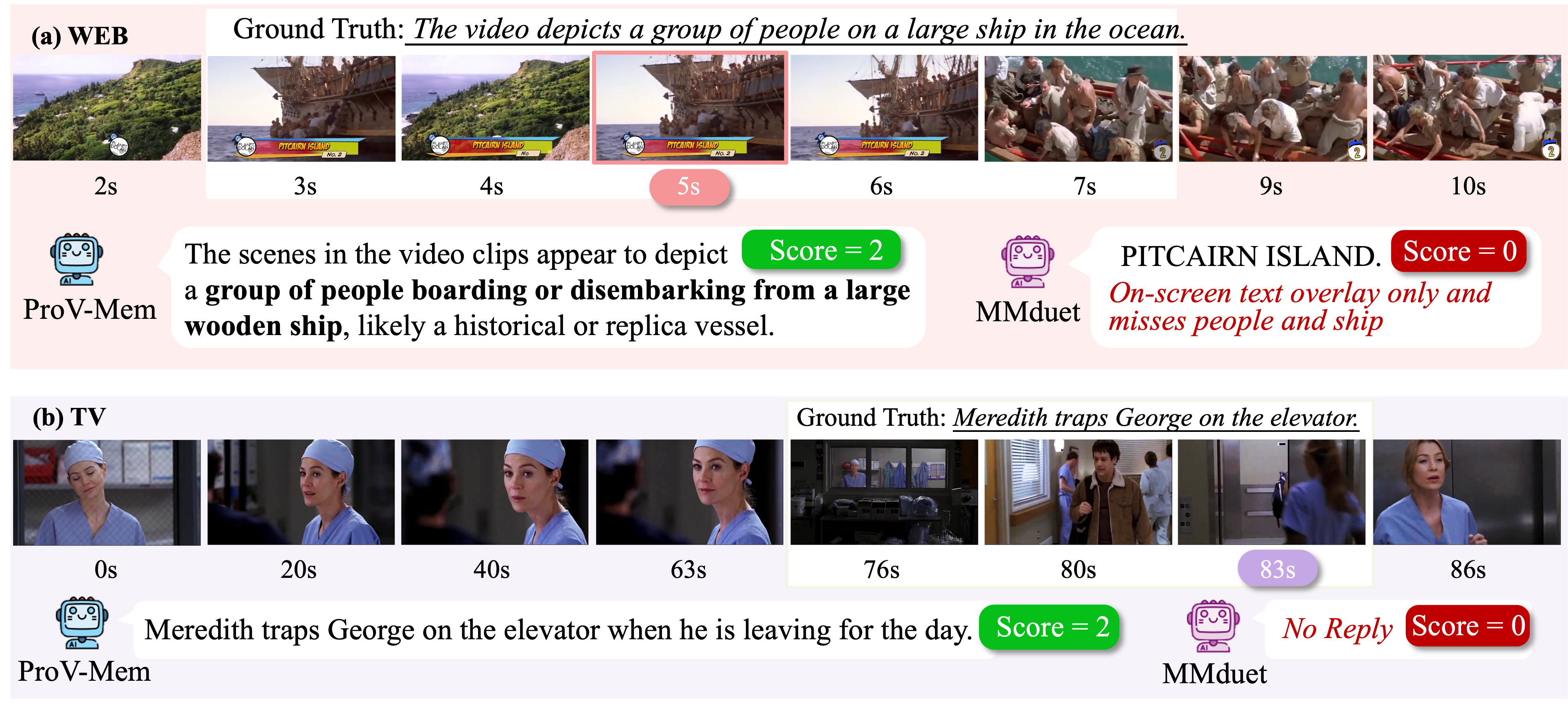}
  \caption{Comparison of (a) MMDuet and (b) our ProV-Mem method.}
\vspace{-1.6em}
\label{fig:case}
\end{figure}

\subsection{Case Study}
Figure~\ref{fig:case} illustrates how the memory tiers support different
proactive decisions. In case (a), the instructional memory guides ProV-Mem to
produce a canonical, visually grounded description of people on a large ship at
5\,s, whereas MMDuet follows the on-screen text overlay and produces a response
that is less informative with respect to the reference event. This example
illustrates the role of the instructional prior in shaping what to say from the
active visual window. In case (b), the temporal-trigger memory detects a
prolonged sequence of non-emitting trigger opportunities and activates the
recovery mechanism, enabling ProV-Mem to produce an accepted response for the
interpersonal event at 83\,s. The episodic memory evaluates each candidate
against the most recent emitted response before delivery, preventing the
recovery mechanism from simply increasing redundant emissions. Together, these
examples illustrate the complementary roles of the three tiers in controlling
when to query, what to generate, and whether to emit.

\section{Conclusion}
We presented ProV-Mem, a framework for proactive video reasoning that requires no additional training and organizes decision-making through a three-stage nonparametric memory hierarchy over a frozen VLM backbone. Our contributions are threefold. First, we formulate proactive reasoning as a decomposition of triggering, candidate generation, and emission, which gives the problem a clear and modular structure. Second, we provide a formal suboptimality bound that justifies this decomposition as a principled approximation to joint decision-making over the full causal state. Third, we introduce CAUC, a cost-aware evaluation metric that penalizes redundant, overly frequent, and verbose responses. Experiments on ProactiveVideoQA show that ProV-Mem achieves excellent performance with low latency, consistently improves the utility-cost trade-off, and establishes a stronger Pareto frontier without scenario-specific fine-tuning. These results show that careful memory organization and stagewise decision design are sufficient to produce effective proactive behavior even when the backbone model remains frozen.

\section*{AI use statement}
In this work, generative AI tools (large language models) were used solely for English grammar polishing and LaTeX formatting of the manuscript. Additionally, GPT-4.1 was employed as the reference semantic judge in our evaluation pipeline, strictly adhering to Section~4.1 and the official ProactiveVideoQA evaluation protocol \citep{wang2025proactivevideoqa}.

\bibliography{iclr2027_conference}

@article{wang2025proactivevideoqa,
  title={Proactivevideoqa: A comprehensive benchmark evaluating proactive interactions in video large language models},
  author={Wang, Yueqian and Meng, Xiaojun and Wang, Yifan and Zhang, Huishuai and Zhao, Dongyan},
  journal={arXiv preprint arXiv:2507.09313},
  year={2025}
}

@inproceedings{wang2025videollm,
  title={VideoLLM Knows When to Speak: Enhancing Time-Sensitive Video Comprehension with Video-Text Duet Interaction Format.},
  author={Wang, Yueqian and Meng, Xiaojun and Wang, Yuxuan and Liang, Jianxin and Wei, Jiansheng and Zhang, Huishuai and Zhao, Dongyan},
  booktitle={EMNLP (Findings)},
  pages={6338--6359},
  year={2025}
}

@inproceedings{lin2024video,
  title={Video-llava: Learning united visual representation by alignment before projection},
  author={Lin, Bin and Ye, Yang and Zhu, Bin and Cui, Jiaxi and Ning, Munan and Jin, Peng and Yuan, Li},
  booktitle={Proceedings of the 2024 conference on empirical methods in natural language processing},
  pages={5971--5984},
  year={2024}
}

@inproceedings{maaz2024video,
  title={Video-chatgpt: Towards detailed video understanding via large vision and language models},
  author={Maaz, Muhammad and Rasheed, Hanoona and Khan, Salman and Khan, Fahad},
  booktitle={Proceedings of the 62nd Annual Meeting of the Association for Computational Linguistics (Volume 1: Long Papers)},
  pages={12585--12602},
  year={2024}
}

@inproceedings{xu2016msr,
  title={Msr-vtt: A large video description dataset for bridging video and language},
  author={Xu, Jun and Mei, Tao and Yao, Ting and Rui, Yong},
  booktitle={Proceedings of the IEEE conference on computer vision and pattern recognition},
  pages={5288--5296},
  year={2016}
}

@inproceedings{lei2018tvqa,
  title={Tvqa: Localized, compositional video question answering},
  author={Lei, Jie and Yu, Licheng and Bansal, Mohit and Berg, Tamara},
  booktitle={Proceedings of the 2018 conference on empirical methods in natural language processing},
  pages={1369--1379},
  year={2018}
}

@article{driess2023palm,
  title={Palm-e: An embodied multimodal language model},
  author={Driess, Danny and Xia, Fei and Sajjadi, Mehdi SM and Lynch, Corey and Chowdhery, Aakanksha and Ichter, Brian and Wahid, Ayzaan and Tompson, Jonathan and Vuong, Quan and Yu, Tianhe and others},
  journal={arXiv preprint arXiv:2303.03378},
  year={2023}
}

@inproceedings{o2024open,
  title={Open x-embodiment: Robotic learning datasets and rt-x models: Open x-embodiment collaboration 0},
  author={O’Neill, Abby and Rehman, Abdul and Maddukuri, Abhiram and Gupta, Abhishek and Padalkar, Abhishek and Lee, Abraham and Pooley, Acorn and Gupta, Agrim and Mandlekar, Ajay and Jain, Ajinkya and others},
  booktitle={2024 IEEE International Conference on Robotics and Automation (ICRA)},
  pages={6892--6903},
  year={2024},
  organization={IEEE}
}

@inproceedings{xu2026streamingvlm,
  title={Streamingvlm: Real-time understanding for infinite video streams},
  author={Xu, Ruyi and Xiao, Guangxuan and Chen, Yukang and He, Liuning and Peng, Kelly and Lu, Yao and Han, Song},
  booktitle={International Conference on Learning Representations},
  volume={2026},
  pages={61463--61475},
  year={2026}
}

@inproceedings{chen2024videollm,
  title={Videollm-online: Online video large language model for streaming video},
  author={Chen, Joya and Lv, Zhaoyang and Wu, Shiwei and Lin, Kevin Qinghong and Song, Chenan and Gao, Difei and Liu, Jia-Wei and Gao, Ziteng and Mao, Dongxing and Shou, Mike Zheng},
  booktitle={2024 IEEE/CVF Conference on Computer Vision and Pattern Recognition (CVPR)},
  pages={18407--18418},
  year={2024},
  organization={IEEE}
}

@article{zhang2024long,
  title={Long context transfer from language to vision},
  author={Zhang, Peiyuan and Zhang, Kaichen and Li, Bo and Zeng, Guangtao and Yang, Jingkang and Zhang, Yuanhan and Wang, Ziyue and Tan, Haoran and Li, Chunyuan and Liu, Ziwei},
  journal={arXiv preprint arXiv:2406.16852},
  year={2024}
}

@article{zhang2024internlm,
  title={Internlm-xcomposer-2.5: A versatile large vision language model supporting long-contextual input and output},
  author={Zhang, Pan and Dong, Xiaoyi and Zang, Yuhang and Cao, Yuhang and Qian, Rui and Chen, Lin and Guo, Qipeng and Duan, Haodong and Wang, Bin and Ouyang, Linke and others},
  journal={arXiv preprint arXiv:2407.03320},
  year={2024}
}

@inproceedings{chen2025longvila,
  title={Longvila: Scaling long-context visual language models for long videos},
  author={Chen, Yukang and Xue, Fuzhao and Li, Dacheng and Hu, Qinghao and Zhu, Ligeng and Li, Xiuyu and Fang, Yunhao and Tang, Haotian and Yang, Shang and Liu, Zhijian and others},
  booktitle={International Conference on Learning Representations},
  volume={2025},
  pages={18227--18246},
  year={2025}
}

@article{li2024llava,
  title={Llava-onevision: Easy visual task transfer},
  author={Li, Bo and Zhang, Yuanhan and Guo, Dong and Zhang, Renrui and Li, Feng and Zhang, Hao and Zhang, Kaichen and Zhang, Peiyuan and Li, Yanwei and Liu, Ziwei and others},
  journal={arXiv preprint arXiv:2408.03326},
  year={2024}
}

@inproceedings{lin2026streamingbench,
  title={Streamingbench: Assessing the gap for mllms to achieve streaming video understanding},
  author={Lin, Junming and Fang, Zheng and Chen, Chi and Cheng, Haoxuan and Wan, Zihao and Luo, Fuwen and Wang, Ziyue and Li, Peng and Liu, Yang and Sun, Maosong},
  booktitle={ICASSP 2026-2026 IEEE International Conference on Acoustics, Speech and Signal Processing (ICASSP)},
  pages={12147--12151},
  year={2026},
  organization={IEEE}
}

@inproceedings{niu2025ovo,
  title={Ovo-bench: How far is your video-llms from real-world online video understanding?},
  author={Niu, Junbo and Li, Yifei and Miao, Ziyang and Ge, Chunjiang and Zhou, Yuanhang and He, Qihao and Dong, Xiaoyi and Duan, Haodong and Ding, Shuangrui and Qian, Rui and others},
  booktitle={2025 IEEE/CVF Conference on Computer Vision and Pattern Recognition (CVPR)},
  pages={18902--18913},
  year={2025},
  organization={IEEE}
}

@inproceedings{xiao2024efficient,
  title={Efficient streaming language models with attention sinks},
  author={Xiao, Guangxuan and Tian, Yuandong and Chen, Beidi and Han, Song and Lewis, Mike},
  booktitle={International Conference on Learning Representations},
  volume={2024},
  pages={21875--21895},
  year={2024}
}

@article{han2024free,
  title={Free video-llm: Prompt-guided visual perception for efficient training-free video llms},
  author={Han, Kai and Guo, Jianyuan and Tang, Yehui and He, Wei and Wu, Enhua and Wang, Yunhe},
  journal={arXiv preprint arXiv:2410.10441},
  year={2024}
}

@inproceedings{wang2024videoagent,
  title={Videoagent: Long-form video understanding with large language model as agent},
  author={Wang, Xiaohan and Zhang, Yuhui and Zohar, Orr and Yeung-Levy, Serena},
  booktitle={European Conference on Computer Vision},
  pages={58--76},
  year={2024},
  organization={Springer}
}

@inproceedings{zhong2024memorybank,
  title={Memorybank: Enhancing large language models with long-term memory},
  author={Zhong, Wanjun and Guo, Lianghong and Gao, Qiqi and Ye, He and Wang, Yanlin},
  booktitle={Proceedings of the AAAI conference on artificial intelligence},
  volume={38},
  number={17},
  pages={19724--19731},
  year={2024}
}

@article{wu2022memorizing,
  title={Memorizing transformers},
  author={Wu, Yuhuai and Rabe, Markus N and Hutchins, DeLesley and Szegedy, Christian},
  journal={arXiv preprint arXiv:2203.08913},
  year={2022}
}

@inproceedings{borgeaud2022improving,
  title={Improving language models by retrieving from trillions of tokens},
  author={Borgeaud, Sebastian and Mensch, Arthur and Hoffmann, Jordan and Cai, Trevor and Rutherford, Eliza and Millican, Katie and Van Den Driessche, George Bm and Lespiau, Jean-Baptiste and Damoc, Bogdan and Clark, Aidan and others},
  booktitle={International conference on machine learning},
  pages={2206--2240},
  year={2022},
  organization={PMLR}
}

@inproceedings{you2024ferret,
  title={Ferret: Refer and ground anything anywhere at any granularity},
  author={You, Haoxuan and Zhang, Haotian and Gan, Zhe and Du, Xianzhi and Zhang, Bowen and Wang, Zirui and Cao, Liangliang and Chang, Shih-Fu and Yang, Yinfei},
  booktitle={International Conference on Learning Representations},
  volume={2024},
  pages={57153--57180},
  year={2024}
}

@inproceedings{zhang2023video,
  title={Video-llama: An instruction-tuned audio-visual language model for video understanding},
  author={Zhang, Hang and Li, Xin and Bing, Lidong},
  booktitle={Proceedings of the 2023 conference on empirical methods in natural language processing: system demonstrations},
  pages={543--553},
  year={2023}
}

@inproceedings{lei2020tvqa+,
  title={Tvqa+: Spatio-temporal grounding for video question answering},
  author={Lei, Jie and Yu, Licheng and Berg, Tamara and Bansal, Mohit},
  booktitle={Proceedings of the 58th annual meeting of the association for computational linguistics},
  pages={8211--8225},
  year={2020}
}

@inproceedings{radford2021learning,
  title={Learning transferable visual models from natural language supervision},
  author={Radford, Alec and Kim, Jong Wook and Hallacy, Chris and Ramesh, Aditya and Goh, Gabriel and Agarwal, Sandhini and Sastry, Girish and Askell, Amanda and Mishkin, Pamela and Clark, Jack and others},
  booktitle={International conference on machine learning},
  pages={8748--8763},
  year={2021},
  organization={PmLR}
}

@article{alayrac2022flamingo,
  title={Flamingo: a visual language model for few-shot learning},
  author={Alayrac, Jean-Baptiste and Donahue, Jeff and Luc, Pauline and Miech, Antoine and Barr, Iain and Hasson, Yana and Lenc, Karel and Mensch, Arthur and Millican, Katherine and Reynolds, Malcolm and others},
  journal={Advances in neural information processing systems},
  volume={35},
  pages={23716--23736},
  year={2022}
}

@article{achiam2023gpt,
  title={Gpt-4 technical report},
  author={Achiam, Josh and Adler, Steven and Agarwal, Sandhini and Ahmad, Lama and Akkaya, Ilge and Aleman, Florencia Leoni and Almeida, Diogo and Altenschmidt, Janko and Altman, Sam and Anadkat, Shyamal and others},
  journal={arXiv preprint arXiv:2303.08774},
  year={2023}
}

@article{team2024gemini,
  title={Gemini 1.5: Unlocking multimodal understanding across millions of tokens of context},
  author={Team, Gemini and Georgiev, Petko and Lei, Ving Ian and Burnell, Ryan and Bai, Libin and Gulati, Anmol and Tanzer, Garrett and Vincent, Damien and Pan, Zhufeng and Wang, Shibo and others},
  journal={arXiv preprint arXiv:2403.05530},
  year={2024}
}

@article{glm2024chatglm,
  title={Chatglm: A family of large language models from glm-130b to glm-4 all tools},
  author={Glm, Team and Zeng, Aohan and Xu, Bin and Wang, Bowen and Zhang, Chenhui and Yin, Da and Zhang, Dan and Rojas, Diego and Feng, Guanyu and Zhao, Hanlin and others},
  journal={arXiv preprint arXiv:2406.12793},
  year={2024}
}

@inproceedings{wang2026mmduet2,
  title={Mmduet2: Enhancing proactive interaction of video mllms with multi-turn reinforcement learning},
  author={Wang, Yueqian and Liu, Songxiang and Wang, Disong and Xu, Nuo and Wan, Guanglu and Zhang, Huishuai and Zhao, Dongyan},
  booktitle={International Conference on Learning Representations},
  volume={2026},
  pages={23257--23270},
  year={2026}
}

@article{packer2023memgpt,
  title={Memgpt: Towards llms as operating systems},
  author={Packer, Charles and Wooders, Sarah and Lin, Kevin and Fang, Vivian and Patil, Shishir G and Stoica, Ion and Gonzalez, Joseph E},
  journal={arXiv preprint arXiv:2310.08560},
  year={2023}
}

@inproceedings{park2023generative,
  title={Generative agents: Interactive simulacra of human behavior},
  author={Park, Joon Sung and O'Brien, Joseph and Cai, Carrie Jun and Morris, Meredith Ringel and Liang, Percy and Bernstein, Michael S},
  booktitle={Proceedings of the 36th annual acm symposium on user interface software and technology},
  pages={1--22},
  year={2023}
}

@article{sumers2023cognitive,
  title={Cognitive architectures for language agents},
  author={Sumers, Theodore R and Yao, Shunyu and Narasimhan, Karthik and Griffiths, Thomas L},
  journal={arXiv preprint arXiv:2309.02427},
  year={2023}
}

@misc{qwen2025qwen25vl,
  author       = {{Qwen Team}},
  title        = {Qwen2.5-VL},
  year         = {2025},
  howpublished = {\url{https://qwenlm.github.io/blog/qwen2.5-vl}},
  note         = {Accessed: 2026-09-25}
}

@article{chen2024internvl25,
  title   = {Expanding Performance Boundaries of Open-Source Multimodal Models with Model, Data, and Test-Time Scaling},
  author  = {Chen, Zhe and Wang, Weiyun and Cao, Yue and Liu, Yangzhou and Gao, Zhangwei and Cui, Erfei and Zhu, Jinguo and Ye, Shenglong and Tian, Hao and Liu, Zhaoyang and others},
  journal = {arXiv preprint arXiv:2412.05271},
  year    = {2024}
}
\bibliographystyle{iclr2027_conference}

\newpage
\appendix

\section{Related Works}
\subsection{Proactive Streaming Systems}
\label{app:extended-proactive-streaming}
Passive video question answering (Video-QA) represents a mature domain within multimodal understanding, characterized by retrospective reasoning over fully offline, temporally bounded video sequences \citep{lei2020tvqa+, zhang2023video}. To transition multimodal understanding into interactive contexts, early streaming paradigms introduced online streaming architectures. This includes benchmarks such as StreamingBench \citep{lin2026streamingbench} and OVOBench \citep{niu2025ovo}, which emphasize high-frequency frame-by-frame multimodal inference. However, these systems remain fundamentally question-driven, requiring explicit external queries to trigger each model response and ignoring the problem of autonomous engagement. To formalize autonomous interaction, the paradigm of Proactive Streaming Video QA was introduced by \citet{wang2025proactivevideoqa}, which requires an agent to process an unbounded, continuous streaming input and independently make three interconnected causal decisions at each step: when to speak (timing control), what to say (contextual generation), and whether to deliver the response (emission necessity).

Research addressing this proactive control problem primarily bifurcates into two paradigms, namely parametric trained-head systems and training-free heuristic adaptations. Under the parametric paradigm, architectures such as MMDuet \citep{wang2025videollm}, VideoLLM-Online \citep{chen2024videollm}, and MMDuet2 with Reinforcement Learning \citep{wang2026mmduet2} append dedicated classifier heads to the hidden states of visual backbones, supervising them using dense, frame-level alignment annotations. While displaying high temporal precision on specific, in-distribution scenarios, these models are notoriously sensitive to domain shifts and suffer from catastrophic generalization failure on unseen video densities due to over-fitting on trained temporal priors. Under the training-free paradigm, offline VLMs \citep{li2024llava} are reused through temporal chunking \citep{zhang2024long, chen2025longvila} or heuristic agent-based prompts \citep{han2024free, wang2024videoagent}. Although these systems avoid expensive training, they lack structural mechanisms to explicitly control timing and redundancy, often falling back on brute-force, high-frequency inference that degrades real-time throughput. A parallel line of long-context streaming work addresses engineering challenges via frame-level KV cache compression \citep{xu2026streamingvlm} or attention sinks \citep{xiao2024efficient, xu2026streamingvlm}, yet predominantly assumes passive interaction without active decision gating.

\subsection{Memory-Augmented and Prompt-Based Architecture}
\label{app:extended-memory-prompt}
Modulating frozen vision-language models at inference time via strategic prompt design is an effective, training-free engineering technique. Grounded detail and specification prompts \citep{you2024ferret} forced models to anchor responses in specific spatiotemporal coordinates, while persona prompts \citep{maaz2024video} shaped the stylistic and lexical surface form of outputs. Despite their flexibility, pure prompting methods are inherently memoryless, meaning they cannot dynamically adjust the model’s internal context as the interaction history scales. 
Conversely, memory-augmented architectures employ external storage to modulate runtime generation.

Within these memory-augmented paradigms, Retrieval-Augmented Generation (RAG) methods like RETRO \citep{borgeaud2022improving} and Memorizing Transformers \citep{wu2022memorizing} use dense retrieval networks to query massive external text corpora at each decoding step, which remains computationally expensive and incompatible with real-time stream causality. Database-driven agent memories, such as MemoryBank \citep{zhong2024memorybank} and MemGPT \citep{packer2023memgpt}, compress multi-turn dialogues or interaction summaries into vector databases, executing retrieval queries to fetch historical contexts. However, executing dense database lookups at every frame boundary introduces substantial latency, violating the causality constraints of real-time streaming video understanding. State-driven cognitive architectures, including Generative Agents \citep{park2023generative} and cognitive agent frameworks \citep{sumers2023cognitive}, maintain high-level textual states to plan long-term decisions, but have historically been pursued in isolation from prompt engineering. ProV-Mem addresses these limitations by introducing a unified, non-parametric formulation that co-designs inference-time prompting with explicit state tracking, delivering a lightweight, backbone-agnostic wrapper.

\section{Supplementary Material of Method}
\label{app:supplementary}

\subsection{Mathematical Proof of Lemma~\ref{lem:bound}}
\label{app:proof}

We restate Lemma~\ref{lem:bound} for completeness and provide a proof aligned
with the revised stagewise notation in the main text. The comparison is
counterfactual: at each step, the joint and decomposed processes are evaluated
on the same causal state.

\textbf{Lemma~\ref{lem:bound} (Stagewise Approximation Bound).}
Let $U(\delta)=\mathbb{E}[\sum_{t=1}^{T}u(a_t,r_{g(t)}^*)]$ denote the expected
cumulative utility under decision process $\delta$, where
$u(\cdot,\cdot)\in[0,c]$ is a bounded per-step utility. Let
$\delta_{\mathrm{joint}}^*$ denote the optimal process with access to the full
causal state, and let
$\delta_{\mathrm{dec}}^*=
(\delta_{\mathrm{trig}}^*,\delta_{\mathrm{gen}}^*,
\delta_{\mathrm{emit}}^*)$ denote the decomposed process induced by the three
local memory stages. Let $\epsilon_{\mathrm{trig}}$,
$\epsilon_{\mathrm{gen}}$, and $\epsilon_{\mathrm{emit}}$ denote the average
probabilities that the two processes disagree at the trigger, generation, and
emission stages, respectively. Then
\begin{equation}
U(\delta_{\mathrm{joint}}^*)-
U(\delta_{\mathrm{dec}}^*)
\leq
2cT\bigl(
\epsilon_{\mathrm{trig}}+
\epsilon_{\mathrm{gen}}+
\epsilon_{\mathrm{emit}}
\bigr).
\end{equation}

\textbf{Proof.}

\textbf{Step 1. Per-step utility bound.}
Let $U_t(\delta)$ denote the expected utility contribution at step $t$, such
that $U(\delta)=\sum_{t=1}^{T}U_t(\delta)$. Since
$u(a_t,\cdot)\in[0,c]$, the per-step utility gap is bounded by the difference
between the action distributions induced by the joint and decomposed processes:
\begin{equation}
\left|
U_t(\delta_{\mathrm{joint}}^*)-
U_t(\delta_{\mathrm{dec}}^*)
\right|
\leq
c\,
\left\|
P_{\delta_{\mathrm{joint}}^*}(\cdot\mid S_{\leq t})-
P_{\delta_{\mathrm{dec}}^*}(\cdot\mid S_{\leq t})
\right\|_1.
\end{equation}

\textbf{Step 2. Latent decision chains.}
The joint process follows the standard chain rule
\begin{equation}
P_{\delta_{\mathrm{joint}}^*}(f_t,\hat r_t,e_t\mid S_{\leq t})
=
P(f_t\mid S_{\leq t})
\cdot
P(\hat r_t\mid f_t,S_{\leq t})
\cdot
P(e_t\mid f_t,\hat r_t,S_{\leq t}),
\end{equation}
whereas the decomposed process is factorized by the local memory states:
\begin{equation}
P_{\delta_{\mathrm{dec}}^*}(f_t,\hat r_t,e_t\mid S_{\leq t})
=
P(f_t\mid\mathcal{M}_{\mathrm{trig},t})
\cdot
P(\hat r_t\mid f_t,\mathcal{M}_{\mathrm{inst}},z_t)
\cdot
P(e_t\mid f_t,\hat r_t,\mathcal{M}_{\mathrm{ep},t}).
\end{equation}

The final action is obtained from the latent chain through the deterministic
mapping $a_t=\hat r_t$ when $f_t=e_t=1$, and $a_t=\varnothing$ otherwise.
Therefore, the difference between the induced final-action distributions cannot
exceed the difference between the corresponding latent decision chains.

\textbf{Step 3. Stagewise disagreement bound.}
Under the common-state coupling, a disagreement in the final action can occur
only when at least one stage decision differs between the joint and decomposed
processes. By the union bound, the probability of a final-action disagreement
at step $t$ is at most the sum of the trigger, generation, and emission
disagreement probabilities. Averaging these probabilities over all evaluation
steps gives
\begin{equation}
\left\|
P_{\delta_{\mathrm{joint}}^*}(\cdot\mid S_{\leq t})-
P_{\delta_{\mathrm{dec}}^*}(\cdot\mid S_{\leq t})
\right\|_1
\leq
2\bigl(
\epsilon_{\mathrm{trig}}+
\epsilon_{\mathrm{gen}}+
\epsilon_{\mathrm{emit}}
\bigr).
\end{equation}

\textbf{Step 4. Summing over time.}
Substituting the stagewise bound into the per-step utility bound yields
\begin{equation}
\left|
U_t(\delta_{\mathrm{joint}}^*)-
U_t(\delta_{\mathrm{dec}}^*)
\right|
\leq
2c\bigl(
\epsilon_{\mathrm{trig}}+
\epsilon_{\mathrm{gen}}+
\epsilon_{\mathrm{emit}}
\bigr).
\end{equation}
Summing over $t=1,\ldots,T$ gives
\begin{equation}
U(\delta_{\mathrm{joint}}^*)-
U(\delta_{\mathrm{dec}}^*)
\leq
2cT\bigl(
\epsilon_{\mathrm{trig}}+
\epsilon_{\mathrm{gen}}+
\epsilon_{\mathrm{emit}}
\bigr),
\end{equation}
which completes the proof.

The surface-semantic collision rate $\rho$ is retained as a diagnostic rather
than a term in the bound. It measures the event that a semantically novel
candidate is suppressed by the episodic gate because of high lexical similarity
to the most recent emitted response.

\subsection{Practical Interpretation of Assumption~\ref{asm:suff}}
\label{app:assumption}

Assumption~\ref{asm:suff} is an idealized reference condition under which each
stage can operate from its designated local representation without substantial
loss relative to the full causal state. Lemma~\ref{lem:bound} characterizes
departures from this condition through stagewise discrepancies.

\textbf{Timing sufficiency.}
The trigger stage relies on the temporal state
$\mathcal{M}_{\mathrm{trig},t}=(p_t,c_{\mathrm{sil},t})$, where $p_t$ encodes
the regular trigger cadence and $c_{\mathrm{sil},t}$ counts consecutive
non-emitting trigger opportunities. This representation may be insufficient
when a salient event occurs between regular trigger opportunities and cannot be
identified from temporal state alone. Such cases contribute to
$\epsilon_{\mathrm{trig}}$ and motivate the silence-recovery mechanism.

\textbf{Content sufficiency.}
The generation stage relies on the fixed instructional prior and the active
window representation $z_t=\phi(V_t)$. This representation may be insufficient
when the appropriate response depends on temporally distant evidence or earlier
responses that are unavailable from the current window. In such cases, the
window-level generator may differ from the full-state response, contributing to
$\epsilon_{\mathrm{gen}}$.

\textbf{Emission sufficiency.}
The emission stage relies on the current candidate and the compact episodic
state $\mathcal{M}_{\mathrm{ep},t}=(\bar r_{\mathrm{last},t},
\tau_{\mathrm{last},t}^{\mathrm{emit}})$. This representation may be
insufficient when a candidate is redundant with an older response not captured
by the most recent prefix, or when a semantically novel candidate is suppressed
because of high lexical similarity. Such cases contribute to
$\epsilon_{\mathrm{emit}}$. The latter failure mode is additionally summarized
by the surface-semantic collision rate $\rho$, which is reported as an
interpretable diagnostic rather than the complete emission-stage discrepancy.

\subsection{Prompt Templates}
\label{app:prompts}

To ensure reproducibility and a clear separation of roles across the three memory tiers, we summarize the prompt templates used in ProV-Mem.

\textbf{Primary System Prompt $P_{\mathrm{sys}}$ (Instructional Memory $\mathcal{M}_{\mathrm{inst}}$).}
This prompt is shared across scenarios as the fixed instructional prior. Its role is to reduce $\epsilon_{\mathrm{gen}}$ and $\rho$ by encouraging concise, canonicalized, and visually grounded responses.

\begin{lstlisting}[style=promptstyle]
You are a real-time visual assistant watching a continuous video stream. Your task is to generate a concise, factual, and objective description of the active event in the current visual window.

To ensure consistency and integration with the downstream real-time system, you MUST strictly adhere to the following constraints:
1. Provide a single self-contained sentence of no more than 15 words.
2. Focus only on visible actions and physical interactions in the current window.
3. Do NOT reference past events, do NOT use introductory phrases (e.g., "In this video", "We can see"), and do NOT speculate about future actions.
4. Use standardized and canonical vocabulary with simple active-voice constructions.
5. If no novel or meaningful event is occurring, or if the scene is inactive, you MUST output exactly: "No Reply".
\end{lstlisting}

\paragraph{Silence-Recovery Contingency Prompt $P_{\mathrm{sys}}^{\mathrm{force}}$ (TV scenario).}
This prompt is used in the TV scenario when the silence counter reaches $\kappa$ and the system must recover from prolonged dormancy.

\begin{lstlisting}[style=promptstyle]
You are a real-time visual assistant. The system has detected a prolonged period of silence and inactivity. You MUST generate a description of the current visual window to restore interaction, even if the event is minor.

Constraints:
1. Generate exactly one short factual sentence.
2. Focus on any visible character, posture, or background cue if no major action is occurring.
3. Do NOT output "No Reply". You must produce a valid description.
\end{lstlisting}

\paragraph{Lightweight Scouting Prompt $P_{\mathrm{scout}}$ (VAD scenario).}
This prompt is used in the VAD scenario as a lightweight high-recall pre-screening stage. Its purpose is to decide whether the current window should be forwarded to the full response generator.

\begin{lstlisting}[style=promptstyle]
You are a high-speed security scouting agent. Analyze the current visual window and determine whether there is any ongoing anomaly, security incident, or unusual activity.

You must output a JSON object with exactly two keys:
{
  "anomaly_detected": true/false,
  "category": "none" or a short standardized event category
}

Only output the raw JSON object. Do not output any other text.
\end{lstlisting}

\section{Experiment}
\subsection{Dataset}
To evaluate autonomous and interactive capabilities under strict temporal causality, recent benchmarks have begun to emphasize proactive streaming paradigms. A notable milestone is ProactiveVideoQA (also formalized as ProactiveStreaming) \citep{wang2025proactivevideoqa}, a large-scale streaming multimodal benchmark specifically designed to assess an agent’s proactive interaction capacity over unbounded visual inputs. To comprehensively evaluate proactive agents across diverse temporal densities and semantic distributions, the benchmark incorporates four distinct streaming tasks:
\begin{itemize}
    \item Streaming Video Captioning (WEB), which evaluates continuous, low-latency web video narration.
    \item Egocentric Video Narration (EGO), which demands high-frequency first-person activity tracking characterized by dense, repetitive action substeps.
    \item Subtitle-centric Dialogue QA (TV), which requires the temporal alignment of visual evidence with streaming conversational subtitle flows.
    \item Video Anomaly Detection (VAD), which assesses the model's ability to trigger timely, non-redundant alarms over highly sparse and critical hazard events.
\end{itemize}

Under this protocol, models must dynamically synchronize trigger control, grounded-content generation, and lexical redundancy filtering under strict online constraints, establishing a challenging and unified testbed for next-generation proactive vision-language agents.
\subsection{Experimental Setting}
\label{app-exp:setting}

We evaluate ProV-Mem on all four scenarios of the ProactiveVideoQA dataset. We report the standard Proactive Area Under the Curve (PAUC) at trade-off weight $\omega = 0.5$ together with our proposed Cost-Aware Area Under the Curve (CAUC). Both metrics are computed with the official evaluation framework under the official single-question protocol, using GPT-4.1 as the semantic judge. Unless otherwise noted, all experiments use the same frozen Qwen2.5-VL-7B-Instruct backbone.
The main domain-specific differences lie in trigger cadence, episodic control, and a small number of scenario-specific prompting modules. 
For WEB, we use a regular trigger stride of $K=3$ sampled frames, a similarity threshold of $\eta = 0.75$, and no suppression expiry in episodic memory, that is, $\tau_{\mathrm{sup}} = \infty$. EGO uses the same trigger and similarity settings but introduces a suppression cooldown of $\tau_{\mathrm{sup}} = 30$\,s to permit re-emission of recurring procedural steps after sufficiently long gaps. TV also uses $K=3$ and $\eta = 0.75$, and activates a silence-recovery prompt $P_{\mathrm{sys}}^{\mathrm{force}}$ after $\kappa = 4$ consecutive silent trigger windows. VAD adopts a two-stage pipeline with a lightweight scouting prompt $P_{\mathrm{scout}}$ and an event cooldown of $\tau_{\mathrm{event}} = 8$\,s. For CAUC, we fix the penalty weights to $\alpha = 0.3$ for redundancy, $\beta = 0.3$ for frequency, and $\gamma = 0.4$ for token inefficiency. We set the length reference to $L_{\mathrm{ref}} = 50$ tokens and the cooldown to $\tau_{\mathrm{cooldown}} = 5$\,s for all four scenarios.
We set the comparison prefix length to $L=200$ characters in all experiments.

\subsection{Per-Domain Implementation Details}

For WEB, frames are sampled at 1 fps and grouped every $K=3$ frames. At each trigger window, the backbone receives the current frame batch together with the primary system prompt $P_{\mathrm{sys}}$ and generates a reply of up to 128 tokens using greedy decoding. The episodic memory $\mathcal{M}_{\mathrm{ep}}$ stores a fixed length prefix extracted from the last emitted reply. A newly generated candidate is suppressed when its character level lexical similarity to the stored prefix, measured by longest common subsequence alignment, exceeds $\eta = 0.75$. 
Since $\tau_{\mathrm{sup}} = \infty$, the suppression state is not reset within the video. A negation filter additionally maps outputs containing phrases such as ``there is no'', ``cannot see'', or ``not visible'' to \texttt{No Reply}. The instructional memory $\mathcal{M}_{\mathrm{inst}}$ keeps the fixed prompt $P_{\mathrm{sys}}$ throughout the video, with no per-video adaptation.

For EGO, the inference pipeline is identical to WEB, including 1 fps frame sampling, $K=3$, and $\eta = 0.75$, except that a suppression cooldown of $\tau_{\mathrm{sup}} = 30$\,s is introduced in $\mathcal{M}_{\mathrm{ep}}$. This allows the system to emit again for the same procedural substep after a sufficiently long interval, which is necessary because egocentric cooking and assembly videos often revisit the same action after a temporal gap.

For TV, the model processes the frame stream together with the most recent $W=6$ unique subtitle lines, which are provided in each trigger window as a \texttt{[Recent subtitles]} block. The trigger cadence is fixed at $K=3$ frames, and the episodic suppression threshold remains $\eta = 0.75$. When $\kappa = 4$ consecutive trigger windows pass without an emitted response, corresponding to 12 seconds under 1 fps sampling, the system invokes the recovery prompt $P_{\mathrm{sys}}^{\mathrm{force}}$ to produce a best effort reply, after which the silence counter is reset. In addition, if no valid reply has been emitted throughout the clip, the system performs a single end of video fallback. This mechanism does not require access to gold timestamps at inference time.

VAD employs a two-stage pipeline. In the scouting stage, every $K=3$ frames are processed by the lightweight JSON-output prompt $P_{\mathrm{scout}}$ in a high-recall mode. The backbone returns a binary anomaly flag together with a coarse event category drawn from the UCF-Crime anomaly label space. Any window predicted as \texttt{anomaly\_detected=true}, or matched by a keyword-based fallback rule, is forwarded to the reply stage. In the reply stage, the backbone generates the final natural-language description using $P_{\mathrm{sys}}$. The episodic module then applies the same LCS-based character-level similarity gate with $\eta = 0.75$ to suppress near-duplicate responses, while an event cooldown of $\tau_{\mathrm{event}} = 8$\,s prevents repeated emissions for the same anomaly. A five-frame normal baseline at the start of each video is used to calibrate the scouting stage before anomaly detection begins.

\subsection{Sensitivity to the Timeliness/Correctness Trade-off}
\begin{figure}[!t]
  \centering
  \includegraphics[width=\linewidth]{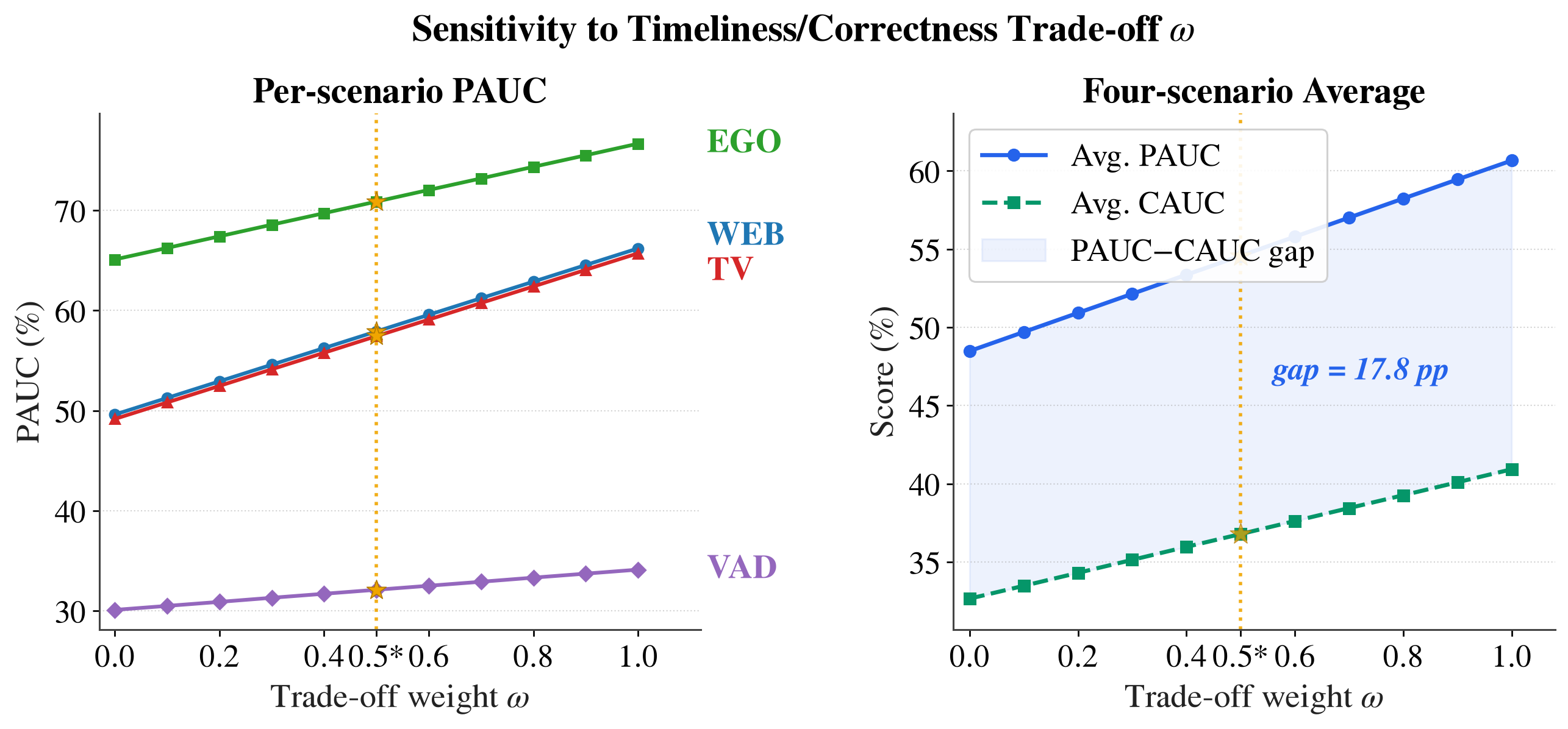}
  \caption{Sensitivity of ProV-Mem to the timeliness/correctness trade-off. Left: per-scenario PAUC across $\omega \in [0,1]$. Right: four-scenario average PAUC and CAUC with the PAUC–CAUC gap shaded.}
  \label{fig:3}
\vspace{-0.6em}
\label{fig:h3}
\end{figure}

To verify that our results are not tied to the specific choice of $\omega = 0.5$, we re-score the same emitted transcripts over $\omega \in \{0.0, \ldots, 1.0\}$. As expected from the metric definition, both PAUC and CAUC vary smoothly and nearly linearly with $\omega$ across all scenarios, with no anomalous reversals or discontinuities. More importantly, ProV-Mem remains strong even under the most timing-sensitive setting ($\omega = 0$). For example, EGO still achieves a PAUC of 65.09, indicating that the gain is not merely a consequence of a favorable trade-off weight or loose timing tolerance. Instead, the improvement persists when timeliness is emphasized most strongly, which is consistent with the claim that response quality remains a major contributor to overall performance.


\subsection{Sensitivity to Suppression Cooldown and Silence-Recovery  Threshold}

\begin{figure}[!t]
  \centering
  \includegraphics[width=0.92\linewidth]{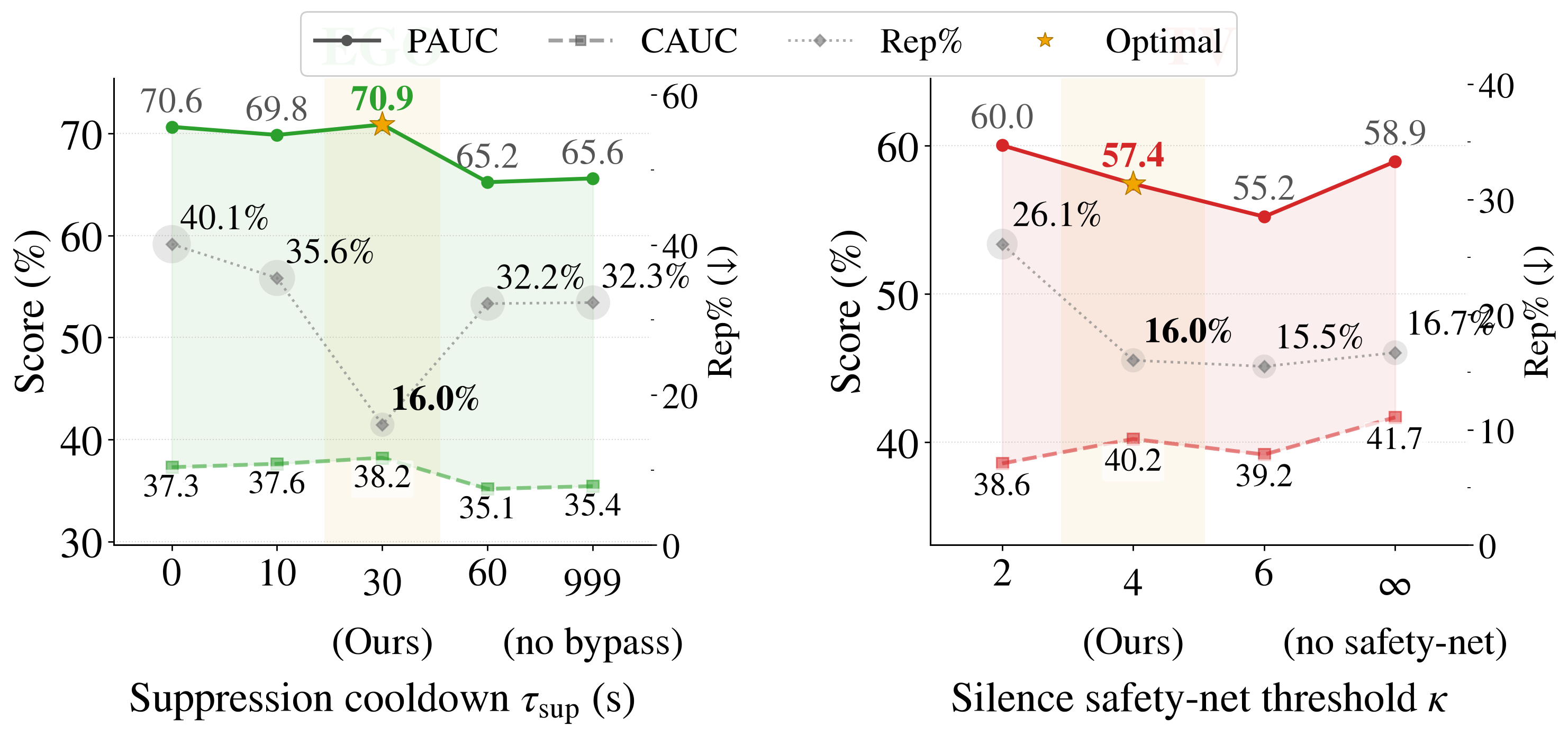}
  \caption{Left: effect of the suppression cooldown $\tau_{\mathrm{sup}}$ on EGO. Right: effect of the silence-recovery threshold $\kappa$ on TV. Highlighted columns are the defaults used throughout.}
  \label{fig:h3h4}
\vspace{-0.6em}
\end{figure}

\paragraph{Suppression cooldown $\tau_{\mathrm{sup}}$ on EGO.}
The suppression cooldown $\tau_{\mathrm{sup}}$ in Eq.~\ref{eq:emit} controls how long an accepted reply continues to suppress lexically similar candidates. Once $t - t_{\mathrm{last}} > \tau_{\mathrm{sup}}$, the episodic state is effectively reset, allowing a similar candidate to be emitted again. With $\tau_{\mathrm{sup}} = 0$, ProV-Mem attains a PAUC of 70.64 but incurs a Rep\% of 40.11\%, indicating that near-duplicate procedural descriptions are repeatedly emitted across adjacent windows. Setting $\tau_{\mathrm{sup}} = 30$\,s yields the strongest overall operating point in this sweep, with PAUC/CAUC of 70.87/38.20 and a much lower Rep\% of 16.04\%. This value closely matches the recurrence scale of egocentric substeps, whose median inter-GT gap is about 28.7\,s. Larger values reduce PAUC to 65.2 to 65.6, suggesting that valid re-occurrences are then suppressed for too long.

\paragraph{Silence silence-recovery threshold $\kappa$ on TV.}
The silence-recovery threshold $\kappa$ specifies after
how many consecutive silent regular trigger opportunities an additional recovery
query is issued, preventing the model from remaining dormant on sparse streams. A small threshold such as $\kappa = 2$ makes this recovery mechanism too aggressive: PAUC rises to 60.02, but Rep\% also increases to 26.13\%, because forced triggers are frequently invoked in weakly informative windows. The default setting $\kappa = 4$ yields a PAUC of 57.43 and a CAUC of 40.22. Under 1 fps sampling, this corresponds to 12\,s of silence and serves as a moderate operating point that preserves a recovery path for sparse events without making forced triggering overly frequent. At the other extreme, disabling the silence-recovery  entirely ($\kappa = \infty$) gives PAUC 58.93, CAUC 41.68, and Rep\% 16.70\%, which indicates lower redundancy but also removes the recovery mechanism for sparse events. Overall, the sweep shows that $\kappa$ mainly controls the balance between recovery coverage and redundant activation, rather than uniformly improving all metrics at once.

\end{document}